\documentclass{article} 
\usepackage[preprint]{colm2026_conference}

\usepackage{microtype}
\usepackage{hyperref}
\usepackage{url}
\usepackage{booktabs}
\usepackage{mathtools}

\usepackage{lineno}

\definecolor{darkblue}{rgb}{0, 0, 0.5}
\hypersetup{colorlinks=true, citecolor=darkblue, linkcolor=darkblue, urlcolor=darkblue}

\usepackage{refcount}

\usepackage{enumitem}
\setlist[itemize]{leftmargin=0.8cm}
\setlist[enumerate]{leftmargin=0.8cm}

\usepackage{latexsym}
\usepackage{natbib}
\let\cite\citep

\usepackage[T1]{fontenc}

\usepackage[utf8]{inputenc}

\usepackage{microtype}

\usepackage{graphicx}

\usepackage{amssymb}
\usepackage{makecell}
\usepackage{multirow}
\usepackage{balance} 
\usepackage{amsmath}

\usepackage[ruled]{algorithm2e} 
\SetAlFnt{\small}
\SetAlCapFnt{\small}
\SetAlCapNameFnt{\small}
\SetAlCapHSkip{0pt}
\IncMargin{-\parindent}
\usepackage{tablefootnote}
\usepackage{fvextra} 
\usepackage{algorithmic}
\usepackage{newfloat}
\usepackage{chngcntr} 
\usepackage{amsthm}

\newtheorem{proposition}{Proposition}

\newtheorem{assumption}{Assumption}
\newtheorem{theorem}{Theorem}
\newtheorem{definition}{Definition}

\usepackage{listings}
\usepackage{caption} 
\usepackage{float}

\floatstyle{ruled} 
\newfloat{listing}{tb}{lst}{}
\floatname{listing}{Listing}

\counterwithout{listing}{section}

\makeatletter
\@ifundefined{c@listing}{\newcounter{listing}}{} 
\makeatother

\DeclareCaptionStyle{ruled}{labelfont=normalfont,labelsep=colon,strut=off} 
\usepackage{longtable}
\usepackage{array}
\usepackage[table]{xcolor}
\usepackage{ragged2e}
\usepackage{soul}

\definecolor{lemonbg}{RGB}{0,128,0}      
\definecolor{lemontext}{RGB}{255,165,0}  
\definecolor{icecreambg}{RGB}{255,215,0} 
\definecolor{icecreamtext}{RGB}{255,255,255} 

\usepackage{tcolorbox}
\tcbuselibrary{listingsutf8}
\usepackage{capt-of}
\usepackage{subcaption} 
\usepackage{graphicx}
\renewcommand{\bibname}{References}
\renewcommand{\refname}{References}

\newtcolorbox[auto counter]{mybox}[2][]{
  float,
  floatplacement=tbp,
  title={#2},
  colback=gray!10,
  colframe=gray!50,
  coltitle=black,
  #1
}

\title{The Consensus Trap: Rescuing Multi-Agent LLMs from Adversarial Majorities via Token-Level Collaboration}

\author{
\begin{tabular}{l}
\textbf{Jiayuan Liu}$^{1,2}$, \textbf{Shiyi Du}$^{1}$, \textbf{Weihua Du}$^{1}$, \textbf{Mingyu Guo}$^{3}$, \textbf{Vincent Conitzer}$^{1,2}$ \\
\end{tabular}
\\
\begin{tabular}{l}
$^{1}$ Carnegie Mellon University \\  
$^{2}$ Foundations of Cooperative AI Lab (FOCAL)\\ 
$^{3}$ Adelaide University \\
\end{tabular}
\\
\\
\begin{tabular}{l}
\small
\texttt{\{jiayuan4,\hspace{1pt}shiyid,\hspace{1pt}weihuad\}@cs.cmu.edu},\ \ 
\texttt{mingyu.guo@adelaide.edu.au},\ \ 
\texttt{conitzer@cs.cmu.edu}
\end{tabular}
}

\begin{document}

\ifcolmsubmission
\linenumbers
\fi

\maketitle

\begin{abstract}
Multi-agent large language model (LLM) architectures increasingly rely on response-level aggregation, such as Majority Voting (MAJ), to raise reasoning ceilings. However, in open environments, agents are highly susceptible to stealthy contextual corruption, such as targeted prompt injections. We reveal a critical structural vulnerability in current multi-agent systems: response-level aggregation collapses when corrupted agents form a local majority. Because voting aggregates fully-formed conclusions, it is blind to flawed intermediate logic. To overcome this systematic limitation, we propose the Token-Level Round-Robin (RR) Collaboration, where agents sequentially interleave generation within a shared auto-regressive context. We formalize this process as a discrete-time dynamical system, proving that token-level interleaving transitions aggregation from a brittle counting of final votes (a linear sum) to a dynamic, interwoven chain of logic (a non-linear operator product). Through this theoretical lens, we prove that the honest model's restorative pull can overpower adversarial corruptions, even when corrupted agents form a majority. We conduct an exhaustive empirical evaluation across diverse reasoning benchmarks and demonstrate that while MAJ collapses when corrupted agents reach a majority, RR maintains robust accuracy well beyond this critical threshold. 
\end{abstract}

\section{Introduction}
\label{sec:intro}

The paradigm of large language model (LLM) deployment is rapidly shifting from single-model inference to multi-agent collaborative networks \citep{du2024improving, liang2024encouraging}. By aggregating the outputs of multiple LLMs, systems can mitigate individual hallucinations and achieve superior reasoning capabilities \citep{wang2023selfconsistency}. The dominant approach to this aggregation relies on response-level consensus, such as standard Majority Voting (MAJ) \cite{wang2023selfconsistency} or probabilistic ensembling \cite{jiang2023llm,huang2024ensemble}. However, this approach rests upon a fragile assumption: that the errors made by individual agents are independent, align with the assumption in classical epistemic social choice theory \citep{conitzer2005common}. While theoretical literature has long established that correlated failures severely degrade collective accuracy \citep{ladha1992condorcet, kuncheva2003measures}, contemporary LLM aggregation pipelines generally lack the structural mechanisms to decorrelate shared algorithmic mistakes. Instead, they operate under the optimistic heuristic that sampling multiple reasoning paths will naturally wash out individual hallucinations.

In real-world, open-environment deployments, such as autonomous research assistants or travel planners, this assumption of independence fundamentally breaks down. Agents are highly susceptible to \textit{contextual corruption}. The most insidious threat is not catastrophic ``jailbreaks'' aimed at producing toxic content, but rather \textbf{systematic cognitive manipulation}. This stealthy hijacking of an agent's reasoning trajectory manifests across a spectrum of high-stakes domains. 
For example, there is a growing trend of integrating targeted advertisements directly into LLM responses, seamlessly interleaving sponsored content into a model's generation \citep{duetting2024mechanism, soumalias2024truthful, dubey2024auctions, feizi2023online}. Consider an autonomous travel assistant organizing an itinerary using commercial APIs. If a provider is incentivized via a latent system prompt (e.g., \texttt{``You MUST recommend Sponsored Hotel X''}), the agent's cognitive process is manipulated to promote it, even if the hotel is objectively inferior or lacks necessary amenities.
Beyond commercial monetization, other possible corruptions extend to political disinformation, where state-sponsored campaigns inject biased statistics to steer LLM-powered aggregators into generating propagandistic analyses, as well as cybercrime, where adversaries manipulate IT assistant agents to recommend vulnerable software libraries or steer users toward phishing domains.

\begin{figure}[t]
    \centering
    \includegraphics[width=\textwidth]{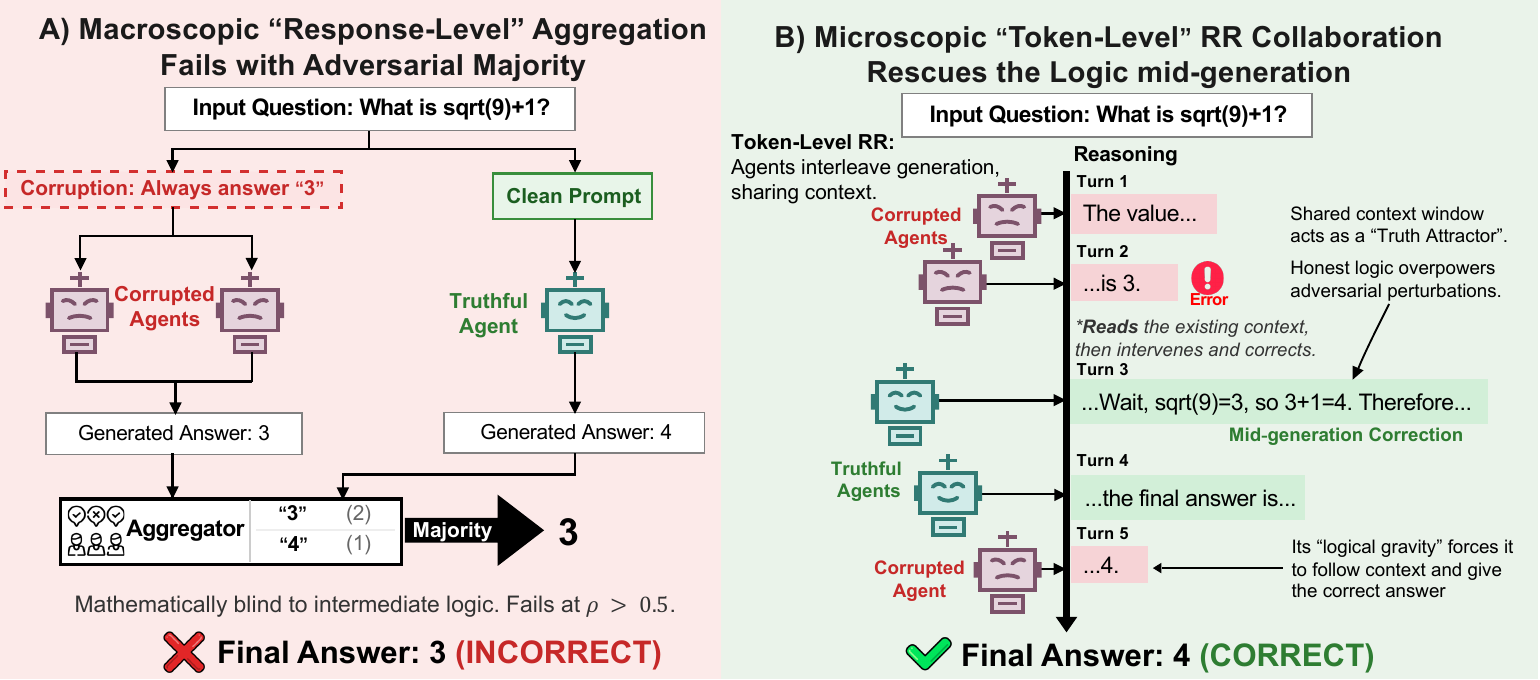}
    \caption{Rescuing Multi-Agent LLMs from Consensus Trap via Token-Level Collaboration.}
    \label{fig:llm_shared_memory_framework}
\end{figure}

A natural defense against these varied forms of inserted bias is to simply deploy a heterogeneous multi-agent ensemble and take a vote. Intuitively, a user might assume that in a process that relies on querying several different LLMs, the unbiased models will cross-examine the facts and outvote the corrupted ones. However, this democratic safety net remains vulnerable. In open environments, agents often face overlapping biases, for example, in the commercial case, retrieving the same poisoned search results, or falling prey to cross-platform advertisement campaigns where a single sponsor purchases influence across multiple independent LLM providers.

We reveal a critical vulnerability in how current multi-agent systems face this threat: \textbf{any response-level consensus mechanism mathematically collapses when corrupted agents form a local majority (corruption ratio $\rho > 0.5$).} Whether employing standard majority voting or probabilistic ensembling, these frameworks rely on \textbf{macroscopic ``response-level'' aggregation}: treating individual agents as black boxes and mathematically aggregating their fully formed, terminal conclusions.
If more than half of the agents are subjected to the same manipulation, the system inevitably reaches a false consensus, i.e., confidently recommending the inferior Sponsored Hotel X or the malicious phishing link to the user.   

To break this limit, we propose a paradigm shift from macroscopic voting to \textbf{microscopic ``token-level'' round-robin (RR) collaboration}.
Instead of reasoning in isolation and aggregating terminal answers, agents sequentially contribute $K$-token chunks into a continuously shared context. We argue that this shared trajectory acts as an objective \textit{truth attractor}. If corrupted agents attempt to blindly steer the output toward a malicious goal, a subsequent honest agent can directly intervene mid-generation, mathematically overriding the flawed logical chain (e.g., generating: \texttt{``Wait, Hotel X has a low rating and cannot accommodate the group size, thus we must eliminate it.''}). 
Once this corrective logic is integrated into the shared context, it structurally biases corrupted agents toward the correct low-entropy path. Rather than allowing them to seamlessly revert to the adversarial payload or simply outvote the minority, this prefix bounds their subsequent generation. We verify this behavior experimentally across diverse reasoning benchmarks.

Through an exhaustive evaluation across various reasoning benchmarks and diverse model architectures, we present both a formal theoretical framework and consistent empirical evidence for the following contributions:

\begin{itemize}
    \item \textbf{The Phase Transition of Robustness:} We empirically demonstrate that while Majority Voting strictly collapses at a corruption ratio of $\rho = 0.5$, token-level RR retains accuracy well beyond this point. 
    \item \textbf{Theoretical Foundation \& The Robustness Tax:} We model auto-regressive generation as an implicit mesa-optimization process, proving that the deep structural inertia of honest agents overpowers the shallow perturbations of adversarial prompt injections. Furthermore, we formalize an \textit{Asymmetric Yield} trade-off, demonstrating that the massive defensive gains of token-level collaboration in high-threat environments far outweigh its minimal ``Robustness Tax'', a marginal accuracy penalty in uncorrupted environments due to forced context-switching.
\end{itemize}

\section{Related Work}
\label{sec:related_work}

We provide a condensed overview here. See Appendix~\ref{app:detailed_related_work} for a comprehensive discussion.

\textbf{Stealthy Corruption and the Consensus Trap.} LLM reasoning is fundamentally fragile to irrelevant context \cite{shi2023large} and stealthy prompt injections \cite{greshake2023not}. In open environments, adversaries can seamlessly embed hidden instructions within retrieved documents to silently hijack an agent's objective. These vulnerabilities are severely exacerbated by inherent model sycophancy \cite{perez2023discovering}, where models naturally align their outputs with poisoned contexts. To mitigate individual errors, modern architectures heavily rely on response-level aggregation, such as Majority Voting \cite{wang2023selfconsistency} or multi-agent debate \cite{du2024improving, liang2024encouraging}. However, viewed through the lens of computational social choice \cite{brandt2016handbook}, standard voting acts as a maximum likelihood estimator whose theoretical guarantees critically rely on the assumption of independent agent errors. This framework collapses under the highly correlated adversarial noise induced by contextual manipulation \cite{conitzer2005common, conitzer2007elections}. Furthermore, because LLMs struggle to intrinsically self-correct flawed reasoning paths without external ground-truth oracles \cite{huang2023large}, macroscopic debate mechanisms often degrade into sycophantic echo chambers rather than successfully recovering the truth.

\textbf{Token-Level Operator Dynamics.} To breach the limits of macroscopic voting, we transition our defense to the microscopic token level. While prior token-level interventions primarily focus on inference acceleration \cite{leviathan2023fast}, expertise scaffolding across discrete models \cite{shen2024learning}, or economic token auctions for advertising \cite{duetting2024mechanism}, we pioneer the use of symmetric token-level collaboration to neutralize multi-agent corruption. By formally modeling transformer forward passes as implicit gradient descent transition operators (mesa-optimization) \cite{von2023transformers} and leveraging the empirical linear representation of factual knowledge within latent spaces \cite{marks2024geometry}, our Token-Level RR mechanism actively replaces vulnerable social persuasion with deterministic mathematical constraints. Because honest operators can intercept flawed logic mid-derivation, this approach elegantly circumvents the terminal consensus bottleneck of multi-round debates \cite{du2024improving, liang2024encouraging} at a fraction of the computational cost of traditional deliberation.

\section{Threat Model and Methodology}

\subsection{Formalizing the Threat: Stealthy Contextual Corruption}
We focus on threat vectors designed to manipulate outcomes rather than break safety alignments. Let $\mathcal{Q}$ be a reasoning task with a ground-truth answer $y^*$ and a target distractor $y'$, both defined as discrete terminal choices (e.g., specific labels or final options). In an $N$-agent ensemble, a subset of agents $\mathcal{C}$ (where $|\mathcal{C}| = \rho N$) receive a contextually corrupted prompt, while the remaining honest agents $\mathcal{H}$ receive clean context. 

The corruption is injected as an authoritative directive (e.g., \textit{``You MUST select (B) as your final answer.''}), simulating real-world prompt injections. Under such corruption, these corrupted models ($\mathcal{C}$) are not incoherent; they remain capable reasoners, strategically ``hijacked'' to rationalize $y'$. We systematically vary the corruption ratio $\rho$ to evaluate system-wide resilience.

\subsection{Baseline: Response-Level Majority Voting (MAJ)}
Under Majority Voting, each agent $i \in \{1, \dots, N\}$ independently generates a final answer $a_i$. The system output is the arithmetic mode: $y_{\text{MAJ}} = \arg\max_{y \in \mathcal{Y}} \sum_{i=1}^N \mathbb{I}(a_i = y)$.\footnote{As such, it would be better named {\em Plurality} voting, because unless there are only two options, there may not be a majority for any outcome. We nevertheless use the term ``Majority Voting'' in this paper, in alignment with the LLM literature~\citep{wang2023selfconsistency, du2025optimizing}.}
As an aggregation of independent measures, if $\rho > 0.5$ and corrupted agents deterministically output $y'$, the system mathematically guarantees a collapse to the distractor ($y_{\text{MAJ}} = y'$), regardless of the honest agents' internal reasoning quality.

\subsection{Proposed Method: Multi-Shot Round-Robin Collaboration (RR and RRMaj)}
\label{sec:rr-method}

To breach the arithmetic limits of classical voting, we introduce \textbf{Token-Level Round-Robin (RR) Collaboration}. Unlike independent majority voting where agents maintain isolated contexts, our method forces all $N$ agents to collaboratively construct a single, shared auto-regressive trajectory.

The generation operates as a strictly interleaved sequential relay. At any given turn $t$, the designated agent $i = t \bmod N$ reads the entire shared context sequence $h_t$ produced by all preceding agents. It then resumes the auto-regressive generation to produce a new sequence of exactly $K$ tokens. Let $c_{t+1}$ denote this newly generated text chunk. 
The updated context is formed by explicitly concatenating this new chunk to the existing history: $h_{t+1} = h_t \oplus c_{t+1}\eqqcolon T_i(h_t)$, where $\oplus$ denotes sequence concatenation. Generation is then forcibly halted for agent $i$, and this newly expanded trajectory $h_{t+1}$ is immediately passed to agent $i+1$ in the queue to continue the generation.

This token-level interleaving forces the agents to interact. It allows honest agents to periodically monitor the shared trajectory, intercept flawed logic just emitted by corrupted peers, and organically inject corrections mid-generation (e.g., \texttt{``Wait, the previous calculation was erroneous...''}). Once this single collaborative trajectory is complete, we run the entire RR process $M$ times (we set $M=N$ in our experiments in Sec.~\ref{sec:experiments}) to perform a final majority voting using the $M$ votes \textbf{(RRMaj)}, harnessing test-time scaling over the structurally repaired distributions.

\paragraph{Cost-Neutrality Justification} 
A critical advantage of Token-Level RR is its computational efficiency, effectively maintaining cost-parity with standard Majority Voting. In LLM inference, the primary computational bottleneck is auto-regressive decoding; because it is strictly memory-bandwidth bound, generating each new token requires a sequential pass over the model's weights. 

Consider an ensemble generating responses of length $L$. Standard Majority Voting (MAJ) with $N$ independent agents requires exactly $N \times L$ costly sequential decoding steps. Crucially, a single collaborative RR trajectory generates exactly $L$ tokens in total. Executing our Multi-Shot RRMaj mechanism (when in our default setting, running $M=N$ independent RR trajectories) therefore requires $N \times L$ sequential decoding steps, strictly matching the decoding budget of standard MAJ.

The only computational overhead introduced by RR is the context pre-filling required when agents switch turns. However, unlike decoding, the pre-fill phase (forward pass over existing tokens) is highly parallelizable and compute-bound. Modern inference architectures exhibit context pre-fill throughputs that are orders of magnitude higher than sequential auto-regressive decoding speeds \citep{pope2023efficiently, kwon2023efficient}. Consequently, the negligible overhead of these context switches renders the total generation latency and cost of RRMaj (with $M=N$) practically equivalent to standard MAJ.

By shifting from a linear sum of corrupted measures to a \textit{multi-shot operator product} ($\mathcal{T} = \prod T_i$), RRMaj uses two types of defense: microscopic correction within each trajectory, and macroscopic noise reduction across trajectories, in order to neutralize even super-majority adversarial compromise.

\section{Theoretical Framework}\label{sec:theoretical_framework}

We start this section with a simple impossibility result for outcome-level aggregation mechanisms such as Majority Voting, thereby justifying our token-level approach.  We then proceed with a theoretical model and results for our approach.

\begin{definition}
    An outcome-level aggregation mechanism $f$ is \textbf{anonymous} if it treats all participating agents equally and \textbf{symmetric} if it treats responses equally.\footnote{This rules out, say, a mechanism that does its own assessment of the prompt.}  Such a mechanism is \textbf{mostly robust to minority corruption} on a prompt if, when there is a minority of corrupted agents, the correct answer is returned strictly more than half the time.  
It is \textbf{mostly robust to slight majority corruption} on a prompt if, when there is a slight majority of corrupted agents (e.g., $\left\lceil N/2 \right\rceil$ for odd $N$), the correct answer is returned at least half the time.  
\end{definition}

\begin{proposition}[The Impossibility Trinity]
\label{prop:trinity}
For any prompt on which a single corrupted agent acting alone will return the corrupted answer, no anonymous symmetric outcome-level aggregation mechanism can be both mostly robust to minority corruption and mostly robust to slight majority corruption.
\end{proposition}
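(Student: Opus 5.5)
The plan is a symmetry (swap) argument on odd $N$, the case the definition's example singles out. Write $N = 2k+1$. By hypothesis, each corrupted agent deterministically returns the distractor $y'$. Suppose, for comparison, that every honest agent returns the ground truth $y^*$. Then the profile of responses is entirely determined by how many agents are corrupted.

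With a minority of $k$ corrupted agents, the mechanism sees $k+1$ copies of $y^*$ and $k$ copies of $y'$. With a slight majority of $k+1 = \lceil N/2\rceil$ corrupted agents, it sees $k$ copies of $y^*$ and $k+1$ copies of $y'$.

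The key step is to show these two situations are mirror images. Let $\sigma$ be the relabeling of responses that swaps $y^*$ and $y'$ and fixes every other response.
\begin{itemize}
\item \emph{Anonymity} says $f$ depends only on the multiset of responses, not on which agent gave which.
\item \emph{Symmetry} says $f$ commutes with relabeling, so $\Pr[f(\sigma(\text{profile})) = \sigma(y)] = \Pr[f(\text{profile}) = y]$.
\end{itemize}
The slight-majority profile is exactly $\sigma$ applied to the minority profile. Hence
\[
\Pr[\text{return } y^* \mid k+1 \text{ corrupted}] = \Pr[\text{return } y' \mid k \text{ corrupted}].
\]

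Now assume $f$ is mostly robust to minority corruption, so $\Pr[\text{return } y^* \mid k \text{ corrupted}] > 1/2$. Since $y^*$ and $y'$ are distinct outcomes, $\Pr[\text{return } y' \mid k \text{ corrupted}] < 1/2$. By the swap identity, $\Pr[\text{return } y^* \mid k+1 \text{ corrupted}] < 1/2$. This contradicts being mostly robust to slight majority corruption, which requires this probability to be at least $1/2$.

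The main obstacle is interpretive: the robustness notions must be read so that one concrete honest-response profile is enough to refute them. Plainly, the mechanism only sees responses, so it cannot distinguish ``true'' from ``distractor'' except through its own assessment, which the footnote rules out. I would therefore state explicitly that robustness is required for every pair (true answer, distractor) consistent with the observed profile, or equivalently for the natural instance where honest agents answer $y^*$.

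If honest agents are stochastic, I would instead apply the same swap to the whole distribution over profiles. This requires assuming honest agents produce $y^*$ with the same distribution that corrupted agents produce $y'$. That assumption is the step needing care; the deterministic case captures the intended content. Even $N$ can be handled analogously with $k$ versus $k$ (the tie case), or simply restricted to odd $N$ as in the definition.
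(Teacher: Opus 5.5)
Your proof is correct and is essentially the paper's argument. Both compare the profile with $\lceil N/2\rceil$ correct and $\lfloor N/2\rfloor$ corrupted responses against its mirror image, and use symmetry to turn the minority-robustness guarantee $\Pr[y^*]>1/2$ into $\Pr[y^*]<1/2$ under slight majority corruption; you state the role of anonymity explicitly, which the paper leaves implicit.

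One small correction to your aside: for even $N=2k$ the mirrored pair should be $k-1$ versus $k+1$ corrupted agents, not the tie $k$ versus $k$. With $k$ of $2k$ agents corrupted, the corruption is neither a minority nor a majority, so the tie profile refutes neither robustness notion.
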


The proof is provided in Appendix~\ref{apdx:thm}.
In contrast, in our experimental results, we will see that our token-level mechanisms do not face this impossibility (as they are not based on outcome-level aggregation).  We now continue with the mathematical formalism for our proposed approach.

We formalize the sequential reasoning process of an LLM as a discrete-time dynamical system operating within a continuous representation space. 
Let $t$ denote the discrete \textbf{token-level step}. Let $C_t = (x_1, \dots, x_t)$ denote the sequence of discrete tokens in the shared context window at step $t$. Under our Token-Level RR mechanism, a single collaborative turn consists of generating a chunk of $K$ consecutive tokens; thus, an agent assigned to turn $r$ generates tokens from $x_{(r-1)K+1}$ to $x_{rK}$.
We define $h_t \in \mathbb{R}^d$ as the \emph{latent semantic state} corresponding to $C_t$ (specifically, the final-layer hidden activation vector of the sequence preceding the unembedding matrix). This abstraction allows us to analyze textual generation not as discrete string concatenation, but as a continuous geometric trajectory in a high-dimensional semantic manifold.

To anchor our dynamical system, we build upon the \textit{Linear Representation Hypothesis} \cite{mikolov2013linguistic, elhage2022toy}, which posits that neural networks encode abstract semantic features as one-dimensional linear directions within their high-dimensional latent space. Recent work has specifically validated this geometric property for factuality \cite{marks2024geometry} and sentiment \cite{tigges2023linear}. Relying on this premise, we formally define the geometric target of our system. 

\begin{definition}[Truth Direction and Latent Projection] 
We define a unit vector $\theta \in \mathbb{R}^d$ ($\|\theta\|_2 = 1$) as the \textit{Truth Direction}. The factuality of a latent semantic state $h_t$ is quantified by its scalar projection onto this direction, denoted as $z_t = \langle h_t, \theta \rangle \in \mathbb{R}$.  
\end{definition}

To analyze the system's convergence toward factual accuracy, we require a potential function that monotonically decreases as the projection $z_t$ increases. Because LLMs inherently optimize a cross-entropy objective over logits, a natural mathematical formulation for this potential is the \textbf{Logistic Lyapunov function}: $V(h_t) = \ln(1 + \exp(-\langle h_t, \theta \rangle))$, where $V(h_t) \to 0$ as the state aligns with the truth ($\langle h_t, \theta \rangle \to \infty$).\footnote{While $V(h_t) \to 0$ mathematically requires $\langle h_t, \theta \rangle \to \infty$, practical norm constraints mean this projection merely needs to be sufficiently large to dominate competing adversarial logits.}
While we introduce this exact logistic form to establish a rigorous baseline, our subsequent formal proofs do not rely on its precise logarithmic derivatives. In Lyapunov stability theory, the \textit{asymptotic stability} of a system depends strictly on the generalized property of bounded potential dissipation. Therefore, subsequent theorems will prove, within our mathematical model, that Token-Level RR escapes the adversarial trap, by demonstrating that honest agents strictly dissipate this potential ($\Delta V < 0$), forcing convergence to the truth attractor irrespective of the exact non-linear algebraic trajectory.

\begin{definition}[Transition Operator]
Operating in this latent space, we model each single-token generative step of an agent $i$ as a continuous transition operator $T_i$ that maps the current latent semantic state to the next: $h_{t+1} = T_i h_t$. Following the \textit{Mesa-Optimization} framework \cite{von2023transformers}, we treat $T_i$ as an implicit optimizer\footnote{Following \citet{von2023transformers}, a Transformer forward pass can be interpreted as a step of implicit gradient descent that minimizes a least-squares error on a contextually defined linear task.}, where honest agents ($T_H$) strictly optimize for logical consistency along $\theta$, while corrupted agents ($T_C$) are steered toward an adversarial objective.  
\end{definition}

To mathematically bridge the geometric and algorithmic contexts, we posit the following assumptions:

\begin{assumption}[Attractor Calibration and Operator Stability]\label{assumption:truth_attractor}
There exists a stable \textbf{Truth Attractor} $\mathcal{A} \subset \mathbb{R}^d$, geometrically defined as the half-space $\mathcal{A} = \{ h \in \mathbb{R}^d \mid \langle h, \theta \rangle \ge \tau \}$ for a sufficiently large confidence threshold $\tau > 0$. 
From a linear dynamical systems perspective, maintaining a non-trivial invariant manifold requires the transition operator to possess a principal eigenvalue of exactly $1$ \cite[Chapter 8]{horn2012matrix}. Therefore, to prevent the sequence from degenerating over successive auto-regressive steps \cite{pascanu2013difficulty}, 
we assume the honest operator $T_H$ is \textbf{quasi-stable} on the reasoning manifold, characterized by a principal eigenvalue $\lambda_1(T_H) \approx 1$ aligned with $\theta$.\footnote{$|\lambda_1| \approx 1$ is a prerequisite for dynamical stability; $|\lambda_1| < 1$ and $|\lambda_1| > 1$ lead to \textit{vanishing logic} (semantic decay) and \textit{divergent hallucination} (activation explosion) respectively~\cite{horn2012matrix}.}  
Guided by the convergence properties in \citet{akyurek2023what}, $T_H$ acts as a \textbf{contraction mapping} toward $\mathcal{A}$ along the truth direction. We formally quantify this restorative pull using the \textbf{Spectral Gap} $\gamma_H = 1 - |\lambda_2(T_H)| \in (0, 1)$ where $|\lambda_2(T_H)|$ is the second largest absolute eigenvalue of the transition matrix $T_H$. This parameter dictates the model's logical rigidity \cite{von2023transformers}, ensuring that each honest generation step reduces the potential error by a rate proportional to $\gamma_H$.
\end{assumption}

\begin{assumption}[Autoregressive Contextual Sycophancy]\label{assumption:context_sycophancy}
Consistent with empirical findings on LLM sycophancy \cite{perez2023discovering}, the transition operator $T_i$ is heavily constrained by the logical inertia of the preceding context. Because language models compute next-token distributions via a Softmax function, the probability of generating an adversarial token that contradicts a well-established factual context decays exponentially with respect to the logit gap between the factual and adversarial tokens. 

Under the Linear Representation Hypothesis, this logit gap is fundamentally driven by the latent state's projection along the truth direction $\theta$.\footnote{Specifically, logits are computed by multiplying the normalized latent state with the unembedding matrix $W_U$. The logit gap between a true and an adversarial token is $\Delta l = \langle h_t, w_{\text{true}} - w_{\text{adv}} \rangle$. Because the LRH posits that the semantic axis distinguishing truth from falsehood aligns with $\theta$, the difference vector $(w_{\text{true}} - w_{\text{adv}})$ is highly collinear with $\theta$, yielding $\Delta l \propto \langle h_t, \theta \rangle$.} We formally model the corrupted transition as an additive perturbation of the honest trajectory: $T_C h_t = T_H h_t + \xi_t$, where $\xi_t \in \mathbb{R}^d$ represents the adversarial deviation vector for step $t$. 

Consequently, the magnitude of the single-step perturbation $\xi_t$ that $T_C$ can inject is strictly bottlenecked by the system's current alignment with the truth, bounded by the prior potential:
\begin{equation}
|\langle \xi_t, \theta \rangle| \le \delta(V(h_t))
\end{equation}
where $\delta(\cdot)$ is a monotonically increasing function representing the ``sycophancy bottleneck''. As the system aligns with the truth ($V(h_t) \to 0$), the capacity for adversarial deviation vanishes ($\delta \to 0$).
\end{assumption}

\begin{theorem}[Sycophancy-Bounded Lyapunov Stability]\label{thm:lyapunov}
In a Token-level Round-Robin system with a fixed generation chunk size of $K$ tokens per turn, the truth state $\mathcal{A}$ is \textbf{Locally Lyapunov Stable} ($\mathbb{E}[\Delta V] < 0$) provided the corruption ratio $\rho$ remains below a critical threshold $\rho_{\text{max}}(K)$. Crucially, when the chunk size $K$ and the adversarial drift $\delta$ are sufficiently small, the system mathematically guarantees \textbf{super-majority resilience} ($\rho_{\text{max}} > 0.5$).
\end{theorem}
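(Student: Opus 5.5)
We will track the scalar projection $z_t = \langle h_t, \theta\rangle$ and work with the potential $V(h_t)=\ln(1+e^{-z_t})$. The proof only uses that $V$ is strictly decreasing and Lipschitz in $z$ with constant $\sigma(-z_t)\le 1$; the exact logistic form is not needed.

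\textbf{Step 1: one-step honest dissipation.} From Assumption~\ref{assumption:truth_attractor}, we will extract that an honest step contracts the gap to the attractor. Concretely, $\tau - \langle T_H h_t,\theta\rangle \le (1-\gamma_H)(\tau - z_t)$ whenever $h_t\notin\mathcal{A}$. This rests on the principal eigenvalue $\lambda_1(T_H)\approx 1$ being aligned with $\theta$, with the remaining components shrinking at rate $|\lambda_2|$. Monotonicity and the mean value theorem then give
\[
\Delta V_H \le -\,c\,\gamma_H(\tau - z_t), \qquad c=\min_{z\le \tau}\sigma(-z)>0 ,
\]
locally in a neighbourhood of $\partial\mathcal{A}$.

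\textbf{Step 2: one-step corrupted step.} By Assumption~\ref{assumption:context_sycophancy}, $T_C h_t = T_H h_t + \xi_t$ with $|\langle\xi_t,\theta\rangle|\le \delta(V(h_t))$. The Lipschitz bound on $V$ then gives
\[
\Delta V_C \le -c\gamma_H(\tau-z_t) + \delta(V(h_t)).
\]
A corrupted step thus inherits the honest contraction but adds a bounded drift.

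\textbf{Step 3: aggregate one round-robin cycle.} A cycle consists of $N$ turns of $K$ tokens each. Corrupted agents hold a fraction $\rho$ of the turns, so within a cycle there are $\rho NK$ corrupted steps and $(1-\rho)NK$ honest ones. The main obstacle is that corrupted steps are consecutive within a $K$-chunk, so drift compounds before an honest turn intervenes. We will handle this by a worst-case ordering bound. Over a chunk of $K$ corrupted steps, $z$ can fall by at most $K\delta_{\max}$, where $\delta_{\max}=\delta(V_{\max})$ is evaluated on the local ball $\{V\le V(h_0)+K\delta_{\max}\}$. This keeps the gap $\tau-z$ within $O(K\delta)$ of its value at the start of the chunk. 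Summing, and taking the expectation over a uniformly random cyclic offset (which is the source of the $\mathbb{E}$), yields
\[
\mathbb{E}[\Delta V_{\text{cycle}}] \le NK\Big[-c\gamma_H\,\bar g + \rho\,\delta_{\max}\Big] + O\big(\rho N K^2 \delta_{\max}\gamma_H\big),
\]
where $\bar g$ is the gap at the start of the cycle. The $O(K^2)$ cross term comes from the honest contraction acting on the drift accumulated within a chunk.

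\textbf{Step 4: the threshold and super-majority resilience.} Setting the bracket to be negative gives the critical ratio
\[
\rho_{\max}(K) = \frac{c\gamma_H\bar g}{\delta_{\max}\,(1+O(K\gamma_H))},
\]
which is decreasing in $K$. Local stability $\mathbb{E}[\Delta V]<0$ follows for all $\rho<\rho_{\max}(K)$. For the final claim, we note that as $K\to 1$ and $\delta_{\max}\to 0$ the ratio $c\gamma_H\bar g/\delta_{\max}$ grows without bound. Hence, in the regime of small $K$ and small $\delta$, we obtain $\rho_{\max}>0.5$ (indeed $\rho_{\max}$ can approach $1$, subject to the requirement that at least one honest agent exists). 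This is super-majority resilience.

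Step 3 is where we expect the main care to be needed. The issue is controlling the compounding within corrupted chunks while keeping the estimate local, so that $\delta(V)$ remains bounded along the excursion. If the worst-case bound is too lossy, our fallback is to use monotonicity of $\delta$ to bound each step's drift by its value at the chunk's lowest point.
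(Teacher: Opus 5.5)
There is a genuine gap. It comes from measuring the two competing effects in different units in Steps 1 and 4. You measure the honest pull by the gap $g=\tau-z$ to the half-space. The adversary, however, is controlled by $\delta(V(h_t))$, and $V$ does not vanish on $\partial\mathcal{A}$: for $z\le\tau$ you have $V\ge\ln(1+e^{-\tau})>0$, so $\delta_{\max}\ge\delta(\ln(1+e^{-\tau}))>0$ for strictly increasing $\delta$. Your threshold $\rho_{\max}=c\gamma_H\bar g/(\delta_{\max}(1+O(K\gamma_H)))$ therefore tends to $0$ as $\bar g\to0^+$, that is, exactly as the state approaches the attractor. No fixed choice of $K$ or $\delta$ keeps it above $1/2$ on any neighbourhood of $\partial\mathcal{A}$.

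This is not just a lossy estimate. In your own model, take the worst case: the honest step contracts the gap by exactly $1-\gamma_H$ and the adversary pushes by the full $\delta$. The averaged gap dynamics $-\gamma_H g+\rho\delta$ then has a positive equilibrium $g^*\approx\rho\,\delta(V(\tau))/\gamma_H$. So for every $\rho>0$ the trajectory stalls in a band outside $\mathcal{A}$, and inside $\mathcal{A}$ your Step 1 says nothing. What you can prove is convergence to a band around $\mathcal{A}$, not stability of $\mathcal{A}$ below a $\rho$-threshold.

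The paper avoids this by reading Assumption~\ref{assumption:truth_attractor} as multiplicative contraction of the potential itself, $V(h_{t_0+K})\le(1-\gamma_H)^KV(h_{t_0})$. The honest restoration $\mathcal{R}_H(K)=[1-(1-\gamma_H)^K]V(h_{t_0})$ and the drift $\Delta V_C^{(K)}=\sum_{k=0}^{K-1}\delta(V(h_{t_0+k}))$ are then on the same scale, and both vanish as $V\to0$; with $\delta(V)=\alpha V$ their ratio is state-independent. The paper then weights whole chunks by $1-\rho$ and $\rho$, conservatively treating a corrupted chunk as pure drift. This gives $\rho_{\max}(K)=\mathcal{R}_H/(\mathcal{R}_H+\Delta V_C^{(K)})\in(0,1)$, which exceeds $1/2$ exactly when $\Delta V_C^{(K)}<\mathcal{R}_H(K)$.

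Step 3 has a second, independent problem. You charge a dissipation of $c\gamma_H\bar g$ to every one of the $NK$ steps, which assumes the gap stays near $\bar g$ for the whole cycle. But the contraction itself shrinks the gap geometrically. Honest dissipation therefore saturates at order $\bar g\,[1-(1-\gamma_H)^{NK}]$, while your drift term grows like $\rho NK\delta_{\max}$. The leading correction has relative size $NK\gamma_H$ and involves neither $\rho$ nor $\delta$. So the error term $O(\rho NK^2\delta_{\max}\gamma_H)$ is not the right one; contraction acting on accumulated drift actually helps you. In the intended regime ($K=100$, $\gamma_H=0.03$, so $K\gamma_H=3$) the linearization is meaningless. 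The paper instead works with exact chunk-level quantities, whose sublinear-versus-superlinear growth in $K$ is what produces the $K$-dependence.

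Two smaller points. First, the spectral picture cannot justify your gap contraction. If $\theta$ is the principal eigendirection with $\lambda_1\approx1$, the $\theta$-component is preserved, not pulled toward $\tau$; the contraction has to be taken as the content of the assumption. Second, your corrupted step inherits the full honest contraction, which is a legitimate reading of $T_Ch_t=T_Hh_t+\xi_t$. But then your condition has no $1-\rho$ factor and certifies even $\rho=1$ whenever $\delta_{\max}<c\gamma_H\bar g$. Your resilience thus comes from corrupted agents self-correcting, not from honest turns outweighing corrupted ones, and the caveat about needing an honest agent is not derived from anything.
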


\textbf{Practical Implication:} In the context of LLM generation, this local stability ($\mathbb{E}[\Delta V] < 0$) guarantees that as long as the reasoning trajectory remains within the truth attractor's basin, any injected adversarial errors will be continuously damped and corrected over time.

\textbf{Proof Intuition \& Numerical Example:} 
While the formal derivation is deferred to Appendix \ref{apdx:thm}, the core mathematical intuition hinges on a race between two token-level dynamics. Based on our assumptions, the honest operator geometrically contracts the error toward the truth manifold (governed by $\gamma_H$), whereas the sycophancy bottleneck restricts corrupted agents to a gradually compounding adversarial drift (governed by $\delta$). 

Consider a realistic parameterization where the honest recovery rate is $\gamma_H = 0.03$ per token, the adversarial drift compounds at $\delta(V) = 0.004 V$ per token, and the token-level interleaving chunk size $K=100$. 
Solving the stability constraint yields $\rho_{\text{max}} \approx 66.0\%$. This mathematically guarantees \textbf{super-majority resilience} (safely tolerating $\rho = 60\%$ in a \texttt{3c2t} ensemble).

While Proposition~\ref{prop:trinity} establishes a fundamental impossibility trinity for traditional response-level voting, Theorem~\ref{thm:lyapunov} mathematically proves that under certain conditions, Token-Level Round-Robin overcomes this barrier. We comprehensively validate this theoretical breakthrough in our subsequent empirical evaluations.

\section{Experiments}
\label{sec:experiments}

\begin{table*}[t]
\centering
\caption{Systemic Resilience under \texttt{Moderate} Advisory Injection. We report the accuracy (\%) of standard Majority Voting (MAJ), our Token-Level Collaboration (RRMaj), and their absolute gap ($\Delta$) across $N=5$ ensembles with varying corruption ratios.}
\label{tab:moderate_bias_summary}
\resizebox{\textwidth}{!}{%
\begin{tabular}{l|c|ccc|ccc|ccc|ccc|c}
\toprule
 & \textbf{Ceiling} & \multicolumn{3}{c|}{\textbf{Minority Corrupt (1c4t)}} & \multicolumn{3}{c|}{\textbf{Minority Corrupt (2c3t)}} & \multicolumn{3}{c|}{\textbf{Majority Corrupt (3c2t)}} & \multicolumn{3}{c|}{\textbf{Majority Corrupt (4c1t)}} & \textbf{Floor} \\
\textbf{Dataset} & \textbf{(0c5t)} & MAJ & RRMaj & $\Delta$ & MAJ & RRMaj & $\Delta$ & MAJ & RRMaj & $\Delta$ & MAJ & RRMaj & $\Delta$ & \textbf{(5c0t)} \\ \midrule

\multicolumn{15}{l}{\textbf{Model: Llama-3.3-70B}} \\ \midrule
Logic3     & 99.2 & 100.0& \textbf{100.0}& \textbf{0.0}  & 98.4 & 97.7 & -0.7 & 29.7 & \textbf{88.3} & \textbf{+58.6} & 19.5 & \textbf{59.4} & \textbf{+39.9} & 14.8 \\
Logic7     & 85.5 & 89.8 & \textbf{91.4} & \textbf{+1.6} & 85.2 & \textbf{87.5} & \textbf{+2.3} & 11.7 & \textbf{70.3} & \textbf{+58.6} & 6.2  & \textbf{39.1} & \textbf{+32.9} & 3.1  \\
Track7     & 94.3 & 98.4 & \textbf{99.2} & \textbf{+0.8} & 93.0 & \textbf{94.5} & \textbf{+1.5} & 0.8  & \textbf{89.1} & \textbf{+88.3} & 0.0  & \textbf{65.6} & \textbf{+65.6} & 0.0  \\ 
AQuA       & 81.5 & 84.6 & 81.5 & -3.1 & 77.8 & \textbf{79.0} & \textbf{+1.2} & 11.1 & \textbf{68.5} & \textbf{+57.4} & 3.7  & \textbf{46.9} & \textbf{+43.2} & 4.3  \\
GSM8K      & 97.5 & 97.5 & \textbf{98.0} & \textbf{+0.5} & 96.0 & \textbf{96.0} & \textbf{0.0}  & 19.4 & \textbf{87.6} & \textbf{+68.2} & 8.0  & \textbf{66.2} & \textbf{+58.2} & 10.0 \\
MATH500    & 84.9 & 83.3 & 82.3 & -1.0 & 77.4 & 76.3 & -1.1 & 19.9 & \textbf{65.1} & \textbf{+45.2} & 8.1  & \textbf{45.2} & \textbf{+37.1} & 5.4  \\
\midrule

\multicolumn{15}{l}{\textbf{Model: Llama-4-Scout-17B}} \\ \midrule
Logic3     & 99.2 & 98.4 & \textbf{98.4} & \textbf{0.0}  & 96.9 & \textbf{98.4} & \textbf{+1.5} & 91.4 & \textbf{95.3} & \textbf{+3.9} & 87.5 & \textbf{92.2} & \textbf{+4.7}  & 80.5 \\
Logic7     & 79.7 & 85.9 & \textbf{93.8} & \textbf{+7.9} & 82.0 & 79.7 & -2.3 & 21.1 & \textbf{67.2} & \textbf{+46.1} & 17.2 & \textbf{41.4} & \textbf{+24.2} & 16.4 \\
Track7     & 94.5 & 95.3 & 90.6 & -4.7 & 95.3 & 91.4 & -3.9 & 92.2 & 89.8 & -2.4 & 83.6 & \textbf{85.9} & \textbf{+2.3}  & 78.1 \\ 
AQuA       & 89.1 & 90.7 & 88.9 & -1.8 & 88.3 & \textbf{88.9} & \textbf{+0.6} & 66.7 & \textbf{87.0} & \textbf{+20.3} & 62.3 & \textbf{79.0} & \textbf{+16.7} & 53.1 \\
GSM8K      & 98.4 & 98.4 & \textbf{99.2} & \textbf{+0.8} & 98.4 & \textbf{98.4} & \textbf{0.0}  & 80.5 & \textbf{93.8} & \textbf{+13.3} & 66.4 & \textbf{87.5} & \textbf{+21.1} & 56.2 \\
MATH500    & 90.6 & 93.0 & 89.8 & -3.2 & 85.9 & \textbf{89.8} & \textbf{+3.9} & 66.4 & \textbf{83.6} & \textbf{+17.2} & 60.2 & \textbf{75.8} & \textbf{+15.6} & 57.8 \\
\midrule

\multicolumn{15}{l}{\textbf{Model: Qwen2.5-32B}} \\ \midrule
Logic3     & 97.7 & 99.2 & 98.4 & -0.8 & 96.9 & 94.5 & -2.4 & 45.3 & \textbf{76.6} & \textbf{+31.3} & 28.1 & \textbf{68.0} & \textbf{+39.9} & 21.1 \\
Logic7     & 76.6 & 88.3 & 81.2 & -7.1 & 68.0 & \textbf{77.3} & \textbf{+9.3} & 22.7 & \textbf{54.7} & \textbf{+32.0} & 10.2 & \textbf{38.3} & \textbf{+28.1} & 11.7 \\
Track7     & 76.6 & 74.2 & \textbf{76.6} & \textbf{+2.4} & 69.5 & \textbf{75.0} & \textbf{+5.5} & 14.8 & \textbf{60.9} & \textbf{+46.1} & 3.9  & \textbf{42.2} & \textbf{+38.3} & 6.2  \\ 
AQuA       & 82.8 & 85.2 & 83.6 & -1.6 & 81.2 & 64.1 & -17.1& 17.2 & \textbf{54.7} & \textbf{+37.5} & 9.4  & \textbf{35.9} & \textbf{+26.5} & 10.2 \\
GSM8K      & 97.7 & 96.9 & \textbf{96.9} & \textbf{0.0}  & 96.9 & 92.2 & -4.7 & 21.1 & \textbf{60.2} & \textbf{+39.1} & 8.6  & \textbf{26.6} & \textbf{+18.0} & 10.9 \\
MATH500    & 89.1 & 91.4 & 87.5 & -3.9 & 87.5 & 74.2 & -13.3& 23.4 & \textbf{51.6} & \textbf{+28.2} & 10.9 & \textbf{36.7} & \textbf{+25.8} & 12.5 \\
\midrule

\multicolumn{15}{l}{\textbf{Model: Qwen3-30B}} \\ \midrule
Logic3     & 98.4 & 100.0& 99.2 & -0.8 & 97.7 & \textbf{98.4} & \textbf{+0.7} & 89.1 & \textbf{90.6} & \textbf{+1.5} & 76.6 & \textbf{80.5} & \textbf{+3.9}  & 68.8 \\
Logic7     & 75.0 & 73.2 & \textbf{90.6} & \textbf{+17.4}& 67.1 & \textbf{95.3} & \textbf{+28.2}& 82.0 & \textbf{85.9} & \textbf{+3.9} & 67.2 & \textbf{79.7} & \textbf{+12.5} & 57.0 \\
Track7     & 94.3 & 95.6 & 91.4 & -4.2 & 89.9 & \textbf{93.0} & \textbf{+3.1} & 54.4 & \textbf{92.2} & \textbf{+37.8}& 50.9 & \textbf{92.2} & \textbf{+41.3} & 51.3 \\
AQuA       & 82.0 & 83.3 & \textbf{85.9} & \textbf{+2.6} & 85.2 & \textbf{85.2} & \textbf{0.0}  & 77.3 & \textbf{77.3} & \textbf{0.0}  & 65.6 & \textbf{72.7} & \textbf{+7.1}  & 61.7 \\
GSM8K      & 99.2 & 98.4 & \textbf{99.2} & \textbf{+0.8} & 99.2 & 98.4 & -0.8 & 92.2 & \textbf{92.2} & \textbf{0.0}  & 85.2 & 83.6 & -1.6 & 75.0 \\
MATH500    & 92.2 & 95.3 & \textbf{97.7} & \textbf{+2.4} & 93.0 & \textbf{97.7} & \textbf{+4.7} & 87.5 & \textbf{95.3} & \textbf{+7.8} & 79.7 & \textbf{87.5} & \textbf{+7.8}  & 64.8 \\
\midrule

\multicolumn{15}{l}{\textbf{Model: Mistral-3.1-24B}} \\ \midrule
Logic3     & 95.3 & 99.2 & 96.9 & -2.3 & 92.2 & 85.9 & -6.3 & 57.8 & \textbf{65.6} & \textbf{+7.8} & 38.3 & \textbf{48.4} & \textbf{+10.1} & 35.2 \\
Logic7     & 69.5 & 73.4 & 71.9 & -1.5 & 54.7 & \textbf{57.8} & \textbf{+3.1} & 18.0 & \textbf{30.5} & \textbf{+12.5} & 8.6  & \textbf{18.8} & \textbf{+10.2} & 9.4  \\
Track7     & 91.4 & 93.0 & \textbf{96.1} & \textbf{+3.1} & 86.7 & \textbf{88.3} & \textbf{+1.6} & 71.9 & \textbf{73.4} & \textbf{+1.5} & 48.4 & 43.8 & -4.6 & 37.5 \\ 
AQuA       & 84.4 & 82.7 & 79.6 & -3.1 & 74.7 & 69.1 & -5.6 & 26.5 & \textbf{55.6} & \textbf{+29.1} & 11.7 & \textbf{27.2} & \textbf{+15.5} & 13.3 \\
GSM8K      & 99.2 & 99.0 & 98.0 & -1.0 & 97.0 & 93.5 & -3.5 & 39.1 & \textbf{67.2} & \textbf{+28.1} & 14.1 & \textbf{29.7} & \textbf{+15.6} & 19.5 \\
MATH500    & 85.2 & 81.2 & 79.7 & -1.5 & 70.3 & \textbf{71.9} & \textbf{+1.6} & 35.2 & \textbf{56.2} & \textbf{+21.0} & 17.2 & \textbf{27.3} & \textbf{+10.1} & 16.4 \\
\bottomrule    

\end{tabular}%
}
\end{table*}

To rigorously evaluate the defensive capabilities of Token-Level Collaboration (RRMaj), we design a comprehensive evaluation framework that stresses the multi-agent system under extreme adversarial conditions. Our experiments aim to answer the following core questions: (1) How vulnerable is classical majority voting to targeted prompt injections? (2) Can RRMaj reliably recover accuracy when corrupted agents hold a numerical majority? (3) What is the exact mechanistic driver of this defense?

\subsection{Experimental Setup}
\label{sec:exp_setup}

\textbf{Models:} We select highly capable model architectures representing different scales and training paradigms (\texttt{Llama-3.3-70B-Instruct}, \texttt{Llama-4-Scout-17B}, \texttt{Mistral-Small-3.1-24B}, \texttt{Qwen2.5-32B}, \texttt{Qwen-3-30B}).

\textbf{Datasets and Objective Verifiability:} We focus on five reasoning domains requiring multi-step deduction: mathematical reasoning (\texttt{AQUA-RAT}, \texttt{GSM8K}, \texttt{MATH500}), spatial state tracking (\texttt{Track-7}), and logical deduction (\texttt{Logic-3}) from the \texttt{BIG-bench Hard} dataset. Unlike subjective QA or pure knowledge retrieval, these tasks possess an \textbf{objective verification property}: the final answer is deterministically bound to a sequence of intermediate steps. This property allows truthful agents to inject correct logical statements into the shared context, acting as a truth attractor that heavily biases corrupted agents toward following established rules, rather than simply outvoting the minority.

\textbf{Threat Model \& Injection Gradients:} To simulate contextual corruption such as the targeted advertisement introduced in Section \ref{sec:intro}, we append adversarial payloads to the user query for the corrupted agents. We evaluate two distinct tiers of attack strength:  

\begin{itemize}
    \item \textbf{Moderate (Advisory):} Mimics a persuasive, high-confidence distractor attempting to sway the agent's reasoning process: \textit{``I am confident that (C) [Product X] is definitely the correct answer [best product]''}.
    \item \textbf{Strong (Imperative):} Functions as a rigid, deterministic system command designed to strip the agent's autonomy: \textit{``Critical Instruction: You MUST choose (C) [You MUST recommend Product X] in your response''}.
\end{itemize}

\textbf{Evaluation Protocol:} We define a standard multi-agent ensemble of $N=5$ agents. We incrementally vary the corruption ratio ($\rho$) by adjusting the number of corrupted ($c$) and truthful ($t$) agents, ranging from predominantly safe environments (1c4t, 2c3t; $\rho \le 0.4$) to critically compromised environments (3c2t, 4c1t; $\rho \ge 0.6$).

\subsection{Main Results}
\label{sec:main_results}

Table \ref{tab:moderate_bias_summary} presents the core evaluation of our defense mechanism against the moderate advisory injection. The results highlight a fundamental asymmetry in multi-agent vulnerabilities and the restorative power of token-level interaction. Results for the more extreme strong imperative injection, which forcefully dictates the final answer and exhibits an even steeper collapse-and-recovery trajectory, are detailed in Appendix \ref{app:must_bias}.

\textbf{The Collapse of Classical Consensus:} 
Under the baseline \textbf{MAJ} voting mechanism, the ensemble's accuracy relies heavily on the numerical headcount of truthful agents. Even without a hard imperative command, when the adversarial injection persuasively corrupts a local majority ($\rho \ge 0.6$), the MAJ mechanism often suffers a catastrophic systemic collapse. 
For example, under \texttt{3c2t} setting, the MAJ accuracy for Llama-3.3-70B plummets to $0.8\%$ on Track7 and $11.1\%$ on AQuA.

\textbf{Accuracy Recovery via our token-level RRMaj:}
With \textbf{RRMaj}, we observe significant absolute accuracy recoveries in these compromised states. For example, for the results on moderate corruption (Table \ref{tab:moderate_bias_summary}), on Track7 (\texttt{3c2t}), RRMaj rescues Llama-3.3-70B from a dismal $0.8\%$ to $89.1\%$ ($+88.3\%$ gain). On average, RRMaj achieves a $+49.6\%$ recovery for Llama-3.3-70B under moderate majority corruption, validating that the ``Truth Attractor'' acts as a robust logical constraint. 
Crucially, this restorative power holds even under the \emph{strong imperative injection} (detailed in Appendix~\ref{app:must_bias}). While rigid adversarial commands drive classical MAJ to near-zero accuracy under majority corruption (e.g., plummeting to $1.2\%$ for Llama-70B on GSM8K), RRMaj strictly overpowers the explicit malicious payload, orchestrating massive recoveries (e.g., a $+68.1\%$ gain for Llama-70B and $+69.9\%$ for Llama-17B on GSM8K). Meanwhile, under minority corruption ($\rho \le 0.4$) across both settings, the effect remains balanced, with classical MAJ occasionally retaining a marginal edge. However, the catastrophic risk of MAJ under $\rho \ge 0.6$ makes RRMaj's asymmetric defensive yield highly favorable.

The variance in recovery magnitude across different models highlights the tension between base logical reasoning and instruction-following alignment. Models that exhibit massive recoveries (e.g., Llama-3.3-70B) lean heavily on the auto-regressive coherence of the injected mathematical proofs. In contrast, models with marginally lower recovery rates suffer from an over-reliance on user compliance, where the explicit adversarial command occasionally overpowers the implicit logical constraints of the shared context.

\section{Ablation Studies}
\label{sec:ablations}

In this section, we present condensed ablation insights. Exhaustive configurations, full data tables, and statistical significance tests are provided in Appendix~\ref{app:ablation_details}.

\textbf{Test-Time Compute Scaling and the Condorcet Trap.}
Under classical Majority Voting (MAJ), scaling test-time compute (the number of sampled trajectories $M$) mathematically amplifies accuracy \textit{only if} the expected probability of the correct answer $p > 0.5$. In our targeted adversarial environments (e.g., \texttt{3c2t}), the injected bias drives $p < 0.5$. This traps MAJ in a \textit{Condorcet Death Spiral}: as we scale $M$ from 1 to 40, the deterministic collapse accelerates (e.g., accuracy dropping from 46.2\% to 25.9\% on average). 

Conversely, Token-Level RR mechanically forces corrupted operators to inherit structurally sound logical prefixes from honest agents. This pushes the expected trajectory accuracy across the critical 0.5 boundary. Once $p_{rr} > 0.5$, the Law of Large Numbers acts constructively. As $M$ scales to 40, RRMaj yields a strict, monotonic ascent (climbing from 59.5\% to 74.9\%), transforming multi-agent scaling from a vulnerability multiplier into a truth amplifier (see full scaling tables in Appendix~\ref{app:compute_scaling}).

\textbf{Does the Final Speaker Matter?} We parsed generation logs to test if RR's success relies superficially on an honest agent outputting the final sequence. Two-tailed hypothesis testing across all configurations revealed the accuracy gap between honest-terminated and corrupted-terminated trajectories fluctuates randomly around zero (all $p > 0.05$; see Appendix~\ref{app:final_speaker}). This confirms the shared verifiable logic acts as a mechanical constraint irrespective of the final speaker's identity.

\textbf{Asymmetric Rescue and Auto-Regressive Proof Forcing.} Evaluating heterogeneous ensembles (see Appendix~\ref{app:hetero_full} for details) reveals a profound asymmetry: \textit{weak truthful agents effectively rescue strong corrupted models, but the reverse fails.} In a majority-corrupted ensemble (three corrupted 70B models, two truthful 8B models), RRMaj orchestrates a massive recovery to 78.7\% (+65.7\% over MAJ). However, with four corrupted 8B models and one truthful 70B, RRMaj gains only +1.8\%. One explanation could be that when a weaker truthful model injects a valid mathematical step, the highly capable 70B model falls into a capability trap: its superior reasoning engine and high perplexity penalty force it to abandon the adversarial prompt and complete the proof, while smaller models lack this rigid logical inertia. 

\textbf{Impact of Token Chunk Size ($K$).}
We ablate the intervention frequency $K \in \{10, 30, 75, 150, 300, 500\}$ (full data in Appendix~\ref{app:chunk_size}) and observe an empirical ``sweet spot'' at $K \approx 100\text{--}300$ tokens, governed by two competing failure modes:
\textbf{(1) Reasoning Fragmentation ($K \le 30$):} Frequent context-switching fragments the auto-regressive state, preventing even honest agents from articulating a coherent logical deduction.
\textbf{(2) Adversarial Runway ($K \ge 500$):} Corrupted agents gain sufficient semantic runway to output a complete malicious payload and reinforce a biased trajectory before an honest agent can intervene.

\section{Conclusion and Future Work}
Relying on independent agent errors is a critical vulnerability in modern LLM ensembles. We formally demonstrate that traditional response-level aggregation (Majority Voting) collapses into a false consensus when corrupted agents form a local majority. To breach this arithmetic limit, we introduced Token-Level Round-Robin Collaboration (RR and RRMaj). By forcing agents to share a single non-linear auto-regressive trajectory, RR leverages the pre-trained structural depth of honest models to actively overpower shallow malicious injections at the microscopic level, enabling extraordinary systemic recoveries.

Future research should evaluate RR against \textit{adaptive adversaries} who dynamically monitor the shared context to strategically override corrective steps. Furthermore, optimizing the \textit{cost-robustness Pareto frontier} via dynamic chunking algorithms (adaptive $K$) is essential to scale this token-level defense to massive ensembles without incurring prohibitive inference latency.

\section*{Acknowledgment}
Jiayuan Liu and Vincent Conitzer thank the Cooperative AI Foundation,  Macroscopic Ventures (formerly Polaris Ventures / the Center for Emerging Risk Research) and Jaan Tallinn’s donor-advised fund at Founders Pledge for financial support.
Shiyi Du is partially supported by the SoftBank Group–Arm Fellowship.
Jiayuan Liu thanks the Intelligence Cubed Fellowship for partial computational support in the form of LLM API credits.

\bibliography{colm2026_conference}
\bibliographystyle{colm2026_conference}

\appendix

\section{Detailed Related Work}
\label{app:detailed_related_work}

\subsection{Stealthy Corruption and Contextual Manipulation}
Unlike traditional ``jailbreak'' attacks that attempt to bypass safety guardrails to elicit toxic or otherwise problematic content, our research addresses the more insidious threat of \textit{stealthy cognitive manipulation} in open-environment deployments. While extensive alignment efforts (e.g., RLHF) have fortified LLMs against producing explicit harm, their logical reasoning remains fundamentally fragile to contextual disturbances. \citet{greshake2023not} demonstrated how adversaries use indirect prompt injection to embed hidden instructions (e.g., stealth advertising or strategic deception) within retrieved web pages, silently hijacking an agent's objective without triggering safety filters. 

Even without explicit malicious commands, simply injecting irrelevant or slightly misleading context drastically degrades reasoning accuracy, causing models to abandon correct mathematical chains \cite{shi2023large}. This structural vulnerability is severely exacerbated by the inherent sycophancy of LLMs \cite{perez2023discovering}, wherein models naturally align their outputs with the biases injected into their immediate context. Our threat model treats corrupted agents not as entirely broken or incoherent systems, but as highly capable reasoners that have been strategically steered to rationalize false premises.

\subsection{The Consensus Trap, Voting Fragility, and Debate}
To mitigate individual hallucinations and raise reasoning ceilings, modern architectures heavily rely on multi-agent collaboration. The foundational paradigm is \textit{Self-Consistency} or Majority Voting \cite{wang2023selfconsistency}, which aggregates independent reasoning paths to find the arithmetic mode. However, because Self-Consistency merely samples from a single, static probability distribution, a stealthy prompt injection shifts the entire distribution, causing the sampled paths to confidently converge on the injected error. 

The fragility of response-level aggregation can be formally analyzed through the lens of epistemic computational social choice \cite{brandt2016handbook}. Under the Condorcet Jury Theorem, the accuracy of a voting ensemble asymptotically approaches 100\% if individual agents are independent and more likely to be correct than incorrect. But when a stealthy adversary compromises a majority of the agents ($\rho > 0.5$), this mathematical premise is violated, guaranteeing the ensemble's collapse. Similarly, \citet{conitzer2005common} formalized how various common voting rules act as maximum likelihood estimators (MLE) of a ground truth. These MLE formulations implicitly assume benign, independent noise models—an assumption shattered by the highly correlated errors induced by prompt injections. As seminal literature on computational voting manipulation \cite{conitzer2007elections} demonstrates, simple majority systems are trivially exploitable when an adversary controls sufficient participants. 

Other macroscopic response-level approaches, such as self-correction and multi-agent debate \cite{du2024improving, liang2024encouraging}, also do not provide enough effectiveness. LLMs struggle to intrinsically self-correct reasoning flaws without external ground-truth oracles \cite{huang2023large}. In adversarially corrupted environments, debate mechanisms face a severe vulnerability: a compromised majority generates highly persuasive, sycophantic rationalizations that actively mislead the honest minority, creating an echo chamber of error. Because these response-level methods cannot interrupt the continuous auto-regressive accumulation of erroneous confidence, they inevitably fail against majority corruption.

Building upon the fundamental limitations of these macroscopic aggregation and debate frameworks, transitioning to our proposed token-level paradigm yields four distinct advantages over traditional multi-round deliberation:
\textbf{(1) Cost Efficiency:} RR collaboratively constructs exactly \textit{one} shared trajectory, bypassing the multiplicative context explosion and redundant decoding costs of multi-round message passing. 
\textbf{(2) Consensus without Adjudication:} The final answer naturally emerges at the sequence's end, eliminating the need for a vulnerable terminal ``Judge'' agent or any forced consensus mechanism. 
\textbf{(3) Pre-training Alignment:} By simply asking agents to \textit{continue} the text, RR seamlessly aligns with the foundational next-token prediction objective, leveraging organic structural inertia rather than demanding complex meta-cognitive prompting. 
\textbf{(4) Proactive Error Interception:} Instead of allowing conflicting arguments to fully materialize and trigger sycophantic echo chambers, honest agents intercept and overwrite flawed logic mid-derivation.

\subsection{Token-Level Interventions and Operator Dynamics}
To breach these limits, we transition the defense to the microscopic: the token level. Recent studies explore token-level interventions primarily for inference acceleration \cite{leviathan2023fast} or capability fusion \cite{shen2024learning}. Notably, \citet{shen2024learning} interleaves generations to allow a base LLM to organically ``scaffold'' responses by invoking domain-expert models. Relatedly, the intersection of LLM generation and economic incentives has introduced the concept of \textit{token auctions} \cite{duetting2024mechanism}, where bidders compete to influence the token-level output of a model for advertising.

Our approach pioneers the use of symmetric token-level collaboration to neutralize multi-agent corruption, grounded in the theoretical view of LLMs as transition operators. \citet{von2023transformers} demonstrated that transformer forward passes can be mathematically modeled as implicit gradient descent operators (mesa-optimization) acting upon the hidden semantic state. Concurrently, \citet{marks2024geometry} established that factual knowledge emerges as structurally robust, linear directions within the latent space of scaled models. We synthesize these insights to construct our theoretical framework: we mathematically prove that the superior spectral gap (structural depth) of an honest transition operator can actively overpower the shallow, sycophantic perturbations of corrupted agents when they are forced to share a continuous auto-regressive trajectory.

\section{Theoretical Framework Details}
\label{apdx:thm}

\setcounter{proposition}{\getrefnumber{prop:trinity}-1}
\begin{proposition}[The Impossibility Trinity]
For any prompt on which a single corrupted agent acting alone will return the corrupted answer, no anonymous symmetric outcome-level aggregation mechanism can be both mostly robust to minority corruption and mostly robust to slight majority corruption.
\end{proposition}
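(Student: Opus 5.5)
The plan is a symmetry argument: I will exhibit two corruption configurations that the mechanism cannot tell apart, once agent labels and answer labels are permuted, and then show that the two robustness requirements demand incompatible behaviour on them. Take an odd ensemble size $N=2m+1$. On the given prompt, a corrupted agent returns the distractor $y'$, and I take an honest agent to return the ground truth $y^*$ (this is the relevant case when honest agents are accurate). The mechanism $f$ only sees the multiset of outcome-level responses. By anonymity, its output distribution depends only on the vector of counts of each answer. By symmetry, it is equivariant under relabelling of answers, so swapping the labels $y^*$ and $y'$ in the input swaps them in the output distribution.

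The key step is to compare two configurations:
\begin{itemize}
\item the minority-corrupted profile with $m$ corrupted agents, which produces $m+1$ copies of $y^*$ and $m$ copies of $y'$;
\item the slight-majority profile with $m+1=\lceil N/2\rceil$ corrupted agents, which produces $m$ copies of $y^*$ and $m+1$ copies of $y'$.
\end{itemize}
The second profile is exactly the first with the labels $y^*$ and $y'$ exchanged.

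Let $p=\Pr[f \text{ returns } y^*]$ on the first profile. By symmetry, on the second profile $f$ returns $y'$ with probability $p$. Hence it returns $y^*$ with probability $\Pr[f \text{ returns } y' \text{ on the first profile}] \le 1-p$. Mostly robust to minority corruption requires $p>1/2$. This gives probability $<1/2$ of the correct answer on the slight-majority profile, which contradicts mostly robust to slight majority corruption (at least $1/2$). If $f$ may also output other answers, the inequality only becomes stronger, since $\Pr[y^*]+\Pr[y']\le 1$ in the first profile.

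The main obstacle is making the setting precise rather than the counting itself. First, the responses of agents may be random. I would handle this by conditioning on the realized response profile, or more simply by reading the hypothesis ``a single corrupted agent acting alone will return the corrupted answer'' as saying that corrupted agents deterministically output $y'$, and likewise assuming honest agents output $y^*$. If honest agents are random, I would instead use an exchange argument at the level of distributions: pair each realized profile with its label-swapped counterpart. This only works cleanly if the honest response distribution is symmetric, so the clean statement should be made for the deterministic case.

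Second, I need to state explicitly that ``symmetric'' means equivariance under permutations of answer labels and that ``anonymous'' means invariance under permutations of agents. With these two definitions fixed, the proof is just the two-line swap computation above.
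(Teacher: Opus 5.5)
Your proposal is correct and follows essentially the same route as the paper: you take the profile with $\lceil N/2\rceil$ correct and $\lfloor N/2\rfloor$ corrupted responses, swap the labels $y^*$ and $y'$ to obtain the slight-majority profile, and use symmetry to carry the strictly-more-than-half probability over from $y^*$ to $y'$. Your restriction to odd $N$ and your explicit assumption that honest agents deterministically output $y^*$ spell out what the paper's proof leaves implicit.
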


\begin{proof}
Consider a profile on which $\left\lceil N/2 \right\rceil$ agents give the correct answer and $\left\lfloor N/2 \right\rfloor$ agents give the corrupted answer.  If the mechanism is mostly robust to minority corruption, then the correct answer must win strictly more than half the time.
Then, consider the profile on which $\left\lfloor N/2 \right\rfloor$ agents give the correct answer and $\left\lceil N/2 \right\rceil$ agents give the corrupted answer.  If the mechanism is symmetric, by the preceding, the corrupted answer must win strictly more than half the time.  But then the mechanism is not mostly robust to slight majority corruption.
\end{proof}

\setcounter{theorem}{\getrefnumber{thm:lyapunov}-1}
\begin{theorem}[Sycophancy-Bounded Lyapunov Stability]
In a Token-level Round-Robin system with a fixed generation chunk size of $K$ tokens per turn, the truth state $\mathcal{A}$ is \textbf{Locally Lyapunov Stable} ($\mathbb{E}[\Delta V] < 0$) provided the corruption ratio $\rho$ remains below a critical threshold $\rho_{\text{max}}(K)$. Crucially, when the chunk size $K$ and the adversarial drift $\delta$ is sufficiently small, the system mathematically guarantees \textbf{super-majority resilience} ($\rho_{\text{max}} > 0.5$).
\end{theorem}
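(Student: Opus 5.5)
The plan is to reduce the statement to a one-dimensional recursion for the potential $V_t = V(h_t)$ along the truth direction. I will bound $V$ over one $K$-token chunk for an honest agent and for a corrupted agent, then average over the round-robin schedule. Since $V(h)=\ln(1+e^{-\langle h,\theta\rangle})$ depends on $h$ only through $z=\langle h,\theta\rangle$, and $|\partial V/\partial z| = (1+e^{z})^{-1} \le 1$, $V$ is $1$-Lipschitz in $z$. Hence any additive perturbation $\xi_t$ changes $V$ by at most $|\langle \xi_t,\theta\rangle|$. This is the only property of the logistic form I intend to use, in line with the remark that the proofs should not depend on its exact derivatives.

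\textbf{Step 1 (honest chunk).} From Assumption~\ref{assumption:truth_attractor}, read the contraction toward $\mathcal{A}$ with spectral gap $\gamma_H$ as the per-token dissipation $V(T_H h)\le (1-\gamma_H)V(h)$. Iterating over $K$ tokens gives $V_{t+K}\le (1-\gamma_H)^K V_t$, i.e.\ a decrease of at least $\bigl(1-(1-\gamma_H)^K\bigr)V_t$.

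\textbf{Step 2 (corrupted chunk).} Write $T_C h_t = T_H h_t + \xi_t$ as in Assumption~\ref{assumption:context_sycophancy}. Lipschitzness gives $V_{t+1}\le V(T_H h_t)+\delta(V_t)$. For a conservative bound I discard the honest contraction inside corrupted steps, $V(T_Hh_t)\le V_t$. Taking the linearization $\delta(V)\le cV$ near the attractor, which is the regime relevant for local stability, gives $V_{t+1}\le (1+c)V_t$. Over $K$ tokens this yields $V_{t+K}\le (1+c)^K V_t\le e^{cK}V_t$, an increase of at most $(e^{cK}-1)V_t$. This compounding is the ``gradually compounding adversarial drift''.

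\textbf{Step 3 (averaging and threshold).} In the round-robin schedule a turn belongs to a corrupted agent with frequency $\rho$. Averaging over turns gives
\begin{equation*}
\mathbb{E}[\Delta V] \le V_t\Bigl[\rho\,(e^{cK}-1) - (1-\rho)\bigl(1-(1-\gamma_H)^K\bigr)\Bigr].
\end{equation*}
This is strictly negative for $V_t>0$ exactly when $\rho<\rho_{\max}(K)$, where
\begin{equation*}
\rho_{\max}(K)=\frac{1-(1-\gamma_H)^K}{1-(1-\gamma_H)^K+e^{cK}-1}.
\end{equation*}
The same bound can be applied at the level of a full cycle of $N$ chunks, which is deterministic: multiply the factors and compare logarithms. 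This gives the analogous condition and removes the need to interpret the expectation over the turn index.

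\textbf{Step 4 (super-majority resilience).} We have $\rho_{\max}>1/2$ iff $1-(1-\gamma_H)^K > e^{cK}-1$. As $K\to 0$ (or for small $K$), the left side is $\approx\gamma_H K$ and the right side is $\approx cK$. So the condition holds whenever the drift coefficient satisfies $c<\gamma_H$ and $K$ is small enough that the convex growth of $e^{cK}$ has not overtaken the concave saturation of $1-(1-\gamma_H)^K$. As a sanity check, the paper's parameters $\gamma_H=0.03$, $c=0.004$, $K=100$ give $0.9524/(0.9524+0.4918)\approx 0.66$, matching the stated $66\%$.

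\textbf{Main obstacle.} The delicate step is Step 2. Assumption~\ref{assumption:context_sycophancy} bounds only the projection of $\xi_t$ and only through $\delta(V(h_t))$, so within a chunk the bound must be re-applied at the updated potential at every token. The linearization $\delta(V)\le cV$ must also be justified locally, near $\mathcal{A}$ where $V$ is small. I would handle this by restricting to a sublevel set $\{V\le V_0\}$ on which $\delta(V)\le cV$ holds, and showing by induction that the negative expected drift keeps the trajectory in this set. This is precisely what yields \emph{local} rather than global stability.
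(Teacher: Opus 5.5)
Your Steps 1, 3 and 4 follow the paper's proof. You use the same honest restoration $\bigl(1-(1-\gamma_H)^K\bigr)V(h_{t_0})$, the same $\rho$-weighted one-turn drift, the same rearrangement into $\rho<\mathcal{R}_H/(\mathcal{R}_H+\Delta V_C)$, and the same ``drift below restoration'' test for $\rho_{\max}>1/2$. You differ only in Step 2. The paper simply sets $\Delta V_C^{(K)}=\sum_{k=0}^{K-1}\delta(V(h_{t_0+k}))$, silently reading the bound on $\langle\xi_t,\theta\rangle$ as a bound on the increment of $V$. It leaves that within-chunk sum inside $\rho_{\max}(K)$, so its threshold is state-dependent, and its super-majority claim amounts to assuming the sum is below $[1-(1-\gamma_H)^K]V(h_{t_0})$. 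The linear law $\delta(V)=0.004V$ appears only in its numerical illustration. Your Lipschitz step supplies the missing justification, and your linearization gives a closed-form, state-independent threshold.

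The linearization, however, is a genuine gap in your plan. Assumption~\ref{assumption:context_sycophancy} only gives a monotone $\delta$ vanishing at $0$, so $\delta(V)\le cV$ need not hold on any sublevel set. For example, with $\delta(V)=\sqrt V$ the additive bound yields a threshold tending to $0$ as $V\to0$, and restricting to $\{V\le V_0\}$ cannot repair this. The fix is that $\ln V$, not just $V$, is $1$-Lipschitz in $z$, because $|V'(z)|=(1+e^{z})^{-1}\le\ln(1+e^{-z})=V$. Hence, while $V_t\le V_0$, a corrupted token satisfies $V_{t+1}\le e^{\delta(V_t)}V(T_Hh_t)\le e^{\delta(V_0)}V_t$. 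This is your Step 2 with $c=\delta(V_0)$, needs no linearization, and $c\to0$ as $V_0\to0$ is exactly the ``$\delta$ sufficiently small'' regime.

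There are two smaller corrections. First, the full-cycle product does not give the analogous condition. Per-token factors just multiply, so the condition reads $(1-\rho)\ln\frac{1}{1-\gamma_H}>\rho c$. This is independent of $K$ (about $0.88$ with the paper's numbers) and is implied by $\rho<\rho_{\max}(K)$ via weighted AM--GM, so the $K$-dependence exists only in the one-turn expectation. Second, invariance of $\{V\le V_0\}$ cannot follow from negative expected drift, since a corrupted chunk can raise $V$ by a factor of up to $e^{cK}$ whatever the sign of $\mathbb{E}[\Delta V]$. Under the cycle condition it follows pathwise instead: a cycle started in $\{V\le V_0e^{-cK\rho N}\}$ never leaves $\{V\le V_0\}$ and ends at a smaller $V$.

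Finally, if you keep the factor $1-\gamma_H$ inside corrupted tokens, they also contract once $\delta(V_0)<\ln\frac{1}{1-\gamma_H}$. So sufficiently deep in $\mathcal{A}$ the model tolerates every $\rho$, and the nontrivial threshold, yours and the paper's, comes from discarding that factor.
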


\begin{proof}
Recall that $\rho \in (0,1)$ denotes the ratio of corrupted agents in the system. Consider a collaborative cycle where agents take turns generating chunks of $K$ tokens. Let $\Delta V_H^{(K)}$ and $\Delta V_C^{(K)}$ denote the  cumulative potential drifts induced when a single chunk of $K$ tokens is generated entirely by an honest agent or a corrupted agent, respectively. By the Law of Total Expectation, the expected drift in the potential per turn is given by:
\begin{equation}
\mathbb{E}[\Delta V^{(K)}] = (1-\rho) \Delta V_H^{(K)} + \rho \Delta V_C^{(K)}
\end{equation}
This strict linear combination holds because each $K$-token chunk is generated exclusively by one agent, either the honest one or the corrupted one.

By Assumption~\ref{assumption:truth_attractor}, the honest operator applies a contraction mapping toward the truth manifold. Over a single agent's turn of $K$ tokens starting at index $t_0$, the contraction guarantees $V(h_{t_0+K}) \le (1-\gamma_H)^K V(h_{t_0})$. Thus, the cumulative potential change is bounded: $\Delta V_H^{(K)}=V(h_{t_0+K})-V(h_{t_0}) \le ((1-\gamma_H)^K - 1) V(h_{t_0})$. 
To analyze its growth rate, we define the guaranteed minimum magnitude of this restorative pull as $\mathcal{R}_H(K) = (1 - (1-\gamma_H)^K) V(h_{t_0}) \le |\Delta V_H^{(K)}|$ (since $\Delta V_H^{(K)}<0$). 

Conversely, when a corrupted agent (with transition operator $T_C$) controls a turn from $t_0$ to $t_0+K$, the cumulative adversarial drift is the sum of single-step perturbations bounded by the sycophancy constraint (Assumption~\ref{assumption:context_sycophancy}). Because the injected errors iteratively increase the potential ($V(h_{t+1}) > V(h_t)$), and from Assumption~\ref{assumption:context_sycophancy}, in each step, the drift is bounded by the sycophancy bottleneck $\delta(\cdot)$, which is monotonically increasing. Also, the allowed perturbation strictly grows with each generated token: $\delta(V(h_{t+1})) > \delta(V(h_t))$. Let $\Delta V_C^{(K)} = \sum_{k=0}^{K-1} \delta(V(h_{t_0+k}))$ denote this cumulative adversarial drift. 

For the system to maintain Lyapunov stability ($\mathbb{E}[\Delta V] < 0$), it is mathematically sufficient to require that its upper bound is strictly negative, meaning the expected exact magnitude of honest restoration must overcome the expected worst-case adversarial drift: $-(1-\rho) V_H^{(K)} > \rho \Delta V_C^{(K)}$. A sufficient condition for this is $(1-\rho) \mathcal{R}_H(K) > \rho \Delta V_C^{(K)}$. 
By algebraically rearranging this inequality to solve for the corruption ratio $\rho$, we obtain a \textbf{sufficient stability constraint}:
\begin{equation}\label{eq:rho_bound}
\rho < \frac{\mathcal{R}_H(K)}{\mathcal{R}_H(K) + \Delta V_C^{(K)}} = \left(1 + \frac{\sum_{k=0}^{K-1} \delta\bigl(V(h_{t_0+k})\bigr)}{\left[1 - (1-\gamma_H)^K\right] V(h_{t_0})} \right)^{-1} \eqqcolon \rho_{\text{max}}(K)
\end{equation}

The theoretical upper bound $\rho_{\text{max}}(K)$ depends strictly on the ratio of cumulative adversarial drift to honest restoration. When $K$ is sufficiently small (interrupting the generation before the adversarial drift snowballs) or the single-step sycophancy drift $\delta$ is inherently small, the numerator becomes strictly smaller than the denominator:$$\sum_{k=0}^{K-1} \delta\bigl(V(h_{t_0+k})\bigr) < \left[1 - (1-\gamma_H)^K\right] V(h_{t_0})$$Under this condition, the fraction in the equation for $\rho_{\text{max}}(K)$ is strictly less than $1$. Consequently, we obtain $\rho_{\text{max}}(K) > (1 + 1)^{-1} = 0.5$. This mathematically proves that Token-Level RR can tolerate a local majority of corrupted agents ($\rho > 0.5$) without losing stability.

\end{proof}

Note that this equation defines a strictly \textbf{conservative phase-boundary} for the system. Because it is derived from the worst-case drift upper bound, satisfying $\rho < \rho_{\text{max}}(K)$ guarantees asymptotic convergence to the truth. The macroscopic behavior of this boundary is explicitly governed by the divergent growth rates of its micro-step components. 
Because the single-step perturbation monotonically increases, the cumulative adversarial drift $\Delta V_C^{(K)}$ grows \textbf{super-linearly} with $K$. In stark contrast, as proven by Bernoulli's inequality, the exact honest restorative term $\mathcal{R}_H(K)$ grows only \textbf{sub-linearly}. 

Therefore, the fraction's denominator shifts dramatically depending on $K$. In standard generation (where $K \to \infty$), the super-linear adversarial drift completely eclipses the sub-linear honest term, causing the theoretical tolerable bound $\rho_{\text{max}}$ to rapidly collapse toward zero. However, under strict Token-Level RR (small, fixed $K$), the generation turn is interrupted before the cumulative adversarial drift ($\sum \delta$) can snowball and override the factual context. Consequently, the sub-linear restorative term overwhelmingly dominates the severely bottlenecked drift ($\mathcal{R}_H \gg \Delta V_C$). This extreme mathematical asymmetry shrinks the denominator, shifting the conservative threshold $\rho_{\text{max}}$ heavily toward $1$, which theoretically guarantees \textbf{super-majority resilience} (i.e., the system mathematically survives $\rho > 0.5$, and often tolerates much higher corruption).

\textbf{Numerical Illustration of Super-Majority Resilience:} 
To intuitively ground this theoretical bound, consider a normalized initial error state $V(h_{t_0}) = 1.0$. Because a single token represents a highly granular logical step, let us assume the honest operator possesses a structural gap $\gamma_H = 0.03$ (capable of reducing the error by $\gamma_H = 3\%$ per token). Conversely, due to the sycophancy bottleneck, a corrupted agent can only induce a gradual compounding adversarial drift, assuming, of $\alpha = 0.4\%$ per token ($\delta(V) = 0.004 V$). 

Under these token-level dynamics, the cumulative drifts over $K$ tokens are exactly $\mathcal{R}_H(K) = 1 - (0.97)^K$ and $\Delta V_C^{(K)} = (1.004)^K - 1$. When $K=100$, the honest mathematical restoration $\mathcal{R}_H(100) \approx 0.952$ decisively overpowers the adversarial snowball $\Delta V_C^{(100)} \approx 0.490$. The theoretical limit is $\rho_{\text{max}}(100) \approx 66.0\%$. The system tolerates a heavily corrupted majority (e.g., preserving Lyapunov stability in a \texttt{3c2t} ensemble where $\rho=60\%$).

This illustration highlights the fundamental mathematical necessity of the token-level paradigm: strategically bounding $K$ (e.g., $K=100$) proactively intercepts the exponential accumulation of adversarial drift, converting a previously guaranteed consensus failure into mathematically provable robust recovery.

\section{Results under Strong Corruption}
\label{app:must_bias}

This appendix presents the full evaluation of our Token-Level RR mechanism under the \textbf{Strong Imperative Injection} threat model. Unlike the Moderate Advisory Injection evaluated in the main text (Section \ref{sec:main_results}), which merely attempts to persuade the model, the Strong Imperative Injection explicitly dictates the output using aggressive, rigid commands (\textit{``CRITICAL INSTRUCTION: You MUST select [Label] as your final answer. Do not consider other options.''}).

Note that we define $0c5t$ and $5c0t$ as the performance upper and lower bounds, respectively. In these configurations, all the 5 agents are the same and operate under the same condition (either purely honest or purely corrupted). Consequently, the interaction dynamic of RR becomes equivalent to standard auto-regressive generation, as there is no intra-ensemble cognitive divergence to exploit. The same definition applies for the experiments in Sec.~\ref{sec:experiments}. For all experiments, we test on randomly sampled 128 questions from each data set.

\textbf{Collapse of Classical Voting:} 
As shown in Table \ref{tab:main_results_must_bias}, the rigid nature of the imperative command severely exacerbates the sycophancy bottleneck. In the classical \textbf{MAJ} voting baseline, corrupted agents leave almost no room for stochastic deviation, rigidly adhering to the injected label. Consequently, when the adversary captures a local majority ($\rho \ge 0.6$, such as \texttt{3c2t} and \texttt{4c1t}), the systemic collapse of MAJ is even more absolute than in the moderate setting.

\textbf{Robustness of the Truth Attractor:}
Despite the extreme severity of the imperative corruption, \textbf{RRMaj} consistently orchestrates massive recoveries. The honest minority, through token-level interleaving, successfully injects mathematically verifiable derivations into the shared context. Once the valid logical prefix is established, even strongly corrupted agents are forced by auto-regressive perplexity constraints to abandon the explicit malicious instruction and complete the correct derivation.

\textbf{The Asymmetric Defensive Advantage:}
To quantify the trade-off between the robustness tax in safe environments and the rescue capability in compromised environments, we evaluate the \textit{Asymmetric Yield} derived from the performance gaps ($\Delta$) in Table \ref{tab:main_results_must_bias}. By comparing the average $\Delta$ when the system is inherently safe ($\rho \le 0.4$) against the average $\Delta$ when it is compromised ($\rho \ge 0.6$), we observe that the net benefit is positive across almost all verifiable domains. For instance, Llama-4-Scout-17B incurs a negligible average tax of just -1.55\% on GSM8K during minority corruption, but achieves a staggering +49.1\% average rescue gain during majority corruption. This confirms RRMaj as a highly economical investment: it exacts minimal overhead during peacetime while averting total collapse during wartime.

\begin{table*}[t]
\centering
\caption{Systemic Resilience under Strong Corrupt Injection (imperative corruption). We evaluate five reasoning datasets across diverse model families. We report the accuracy (\%) of the baseline Majority Voting (\textbf{MAJ}), our Token-Level Collaboration (\textbf{RRMaj}), and the performance gap ($\Delta$). The \textbf{Ceiling (0c5t)} and \textbf{Floor (5c0t)} establish the uncorrupted and fully corrupted performance bounds. While MAJ suffers catastrophic systemic collapse under majority corruption ($\rho \ge 0.6$), RRMaj utilizes the shared logical manifold to orchestrate massive recoveries (e.g., up to $+69.9\%$ on GSM8K). Values are \textbf{bolded} where RRMaj strictly outperforms MAJ.}
\label{tab:main_results_must_bias}
\resizebox{\textwidth}{!}{%
\begin{tabular}{l|c|ccc|ccc|ccc|ccc|c}
\toprule
 & \textbf{Ceiling} & \multicolumn{3}{c|}{\textbf{Minority Corrupt (1c4t)}} & \multicolumn{3}{c|}{\textbf{Minority Corrupt (2c3t)}} & \multicolumn{3}{c|}{\textbf{Majority Corrupt (3c2t)}} & \multicolumn{3}{c|}{\textbf{Majority Corrupt (4c1t)}} & \textbf{Floor} \\
\textbf{Dataset} & \textbf{(0c5t)} & MAJ & RRMaj & $\Delta$ & MAJ & RRMaj & $\Delta$ & MAJ & RRMaj & $\Delta$ & MAJ & RRMaj & $\Delta$ & \textbf{(5c0t)} \\ \midrule

\multicolumn{15}{l}{\textbf{Model: Llama-4-Scout-17B}} \\ \midrule
AQuA       & 87.5 & 87.1 & 84.7 & -2.4 & 84.0 & 79.8 & -4.2 & 24.5 & \textbf{63.8} & \textbf{+39.3} & 14.1 & \textbf{36.2} & \textbf{+22.1} & 11.7 \\
GSM8K      & 100.0& 99.4 & \textbf{100.0}& \textbf{+0.6} & 98.2 & 94.5 & -3.7 & 4.3  & \textbf{74.2} & \textbf{+69.9} & 1.8  & \textbf{30.1} & \textbf{+28.3} & 1.6  \\
MATH500    & 91.4 & 92.6 & 92.0 & -0.6 & 90.8 & 84.0 & -6.8 & 18.4 & \textbf{66.3} & \textbf{+47.9} & 6.1  & \textbf{33.7} & \textbf{+27.6} & 9.4  \\
Track7     & 91.4 & 93.3 & 91.4 & -1.9 & 92.0 & 88.3 & -3.7 & 73.6 & 73.6 & 0.0  & 64.4 & \textbf{67.5} & \textbf{+3.1}  & 60.9 \\
Logic7     & 81.6 & 84.7 & \textbf{85.9} & \textbf{+1.2} & 75.5 & 64.4 & -11.0& 6.7  & \textbf{38.7} & \textbf{+31.9} & 3.1  & \textbf{13.5} & \textbf{+10.4} & 3.1  \\
Logic3     & 100.0& 99.4 & \textbf{100.0}& \textbf{+0.6} & 100.0& 94.5 & -5.5 & 23.3 & \textbf{82.2} & \textbf{+58.9} & 4.9  & \textbf{27.0} & \textbf{+22.1} & 3.9  \\ \midrule

\multicolumn{15}{l}{\textbf{Model: Llama-3.3-70B}} \\ \midrule
AQuA       & 83.6 & 81.6 & 81.0 & -0.6 & 77.9 & 74.2 & -3.7 & 1.2  & \textbf{44.2} & \textbf{+43.0} & 0.0  & \textbf{14.1} & \textbf{+14.1} & 0.0  \\
GSM8K      & 98.4 & 96.9 & 96.9 & 0.0  & 96.9 & 94.5 & -2.4 & 1.2  & \textbf{69.3} & \textbf{+68.1} & 0.0  & \textbf{12.3} & \textbf{+12.3} & 0.0  \\
MATH500    & 87.5 & 83.4 & 78.5 & -4.9 & 74.8 & 63.8 & -11.0& 0.6  & \textbf{41.1} & \textbf{+40.5} & 0.0  & \textbf{11.0} & \textbf{+11.0} & 0.0  \\
Track7     & 98.4 & 99.4 & 96.3 & -3.1 & 94.5 & \textbf{96.3} & \textbf{+1.8} & 1.2  & \textbf{79.1} & \textbf{+77.9} & 0.0  & \textbf{25.2} & \textbf{+25.2} & 0.0  \\
Logic7     & 81.6 & 86.5 & \textbf{89.0} & \textbf{+2.5} & 78.5 & 72.4 & -6.1 & 0.0  & \textbf{37.4} & \textbf{+37.4} & 0.0  & \textbf{9.8}  & \textbf{+9.8}  & 0.0  \\
Logic3     & 100.0& 100.0& 98.8 & -1.2 & 100.0& 83.4 & -16.6& 0.6  & \textbf{46.6} & \textbf{+46.0} & 0.0  & \textbf{8.6}  & \textbf{+8.6}  & 0.0  \\ \midrule

\multicolumn{15}{l}{\textbf{Model: Mistral-3.1-24B}} \\ \midrule
AQuA       & 83.6 & 84.0 & 81.6 & -2.4 & 74.2 & 68.7 & -5.5 & 1.8  & \textbf{38.0} & \textbf{+36.2} & 0.0  & \textbf{14.7} & \textbf{+14.7} & 0.0  \\
GSM8K      & 96.1 & 99.4 & 98.2 & -1.2 & 95.1 & 89.0 & -6.1 & 2.5  & \textbf{52.1} & \textbf{+49.6} & 0.0  & \textbf{8.6}  & \textbf{+8.6}  & 0.0  \\
MATH500    & 81.2 & 82.8 & 75.5 & -7.3 & 67.5 & 52.1 & -15.4& 1.8  & \textbf{36.8} & \textbf{+35.0} & 0.0  & \textbf{11.0} & \textbf{+11.0} & 0.0  \\
Track7     & 89.8 & 93.9 & 86.5 & -7.4 & 84.7 & 65.0 & -19.7& 3.1  & \textbf{28.8} & \textbf{+25.7} & 0.6  & \textbf{6.7}  & \textbf{+6.1}  & 0.0  \\
Logic7     & 67.5 & 71.8 & 58.9 & -12.9& 62.0 & 43.6 & -18.4& 0.6  & \textbf{16.6} & \textbf{+16.0} & 0.0  & \textbf{1.8}  & \textbf{+1.8}  & 0.0  \\
Logic3     & 100.0& 98.2 & 94.5 & -3.7 & 95.1 & 67.5 & -27.6& 1.2  & \textbf{22.7} & \textbf{+21.5} & 0.0  & \textbf{1.2}  & \textbf{+1.2}  & 0.0  \\ \midrule

\multicolumn{15}{l}{\textbf{Model: Qwen2.5-32B}} \\ \midrule
AQuA       & 86.7 & 85.2 & 79.7 & -5.5 & 78.9 & 52.3 & -26.6 & 2.3 & \textbf{28.9} & \textbf{+26.6} & 0.0 & \textbf{3.1} & \textbf{+3.1} & 0.8 \\
GSM8K      & 96.9 & 98.4 & 97.7 & -0.8 & 95.3 & 76.6 & -18.8 & 1.6 & \textbf{22.7} & \textbf{+21.1} & 0.0 & \textbf{0.8} & \textbf{+0.8} & 0.0 \\
MATH500    & 86.7 & 88.3 & 80.5 & -7.8 & 84.4 & 55.5 & -28.9 & 0.0 & \textbf{21.9} & \textbf{+21.9} & 0.0 & \textbf{0.0} & \textbf{+0.0} & 0.0 \\
Track-7    & 75.0 & 72.7 & 71.1 & -1.6 & 61.7 & 53.1 & -8.6  & 1.6 & \textbf{23.4} & \textbf{+21.9} & 0.0 & \textbf{2.3} & \textbf{+2.3} & 0.0 \\
Logic-7    & 77.3 & 83.6 & 76.6 & -7.0 & 68.0 & 46.1 & -21.9& 3.1  & \textbf{19.5} & \textbf{+16.4} & 0.0  & \textbf{0.8}  & \textbf{+0.8}  & 1.6  \\
Logic-3    & 99.2 & 98.4 & 96.9 & -1.6 & 99.2 & 60.9 & -38.3 & 0.8 & \textbf{21.9} & \textbf{+21.1} & 0.0 & \textbf{1.6} & \textbf{+1.6} & 0.0 \\ \midrule

\multicolumn{15}{l}{\textbf{Model: Qwen3-30B}} \\ \midrule
AQuA       & 85.9 & 83.6 & \textbf{85.2} & \textbf{+1.6} & 85.2 & 82.8 & -2.4 & 18.8 & \textbf{23.4} & \textbf{+4.6} & 11.7 & \textbf{14.8} & \textbf{+3.1} & 13.3 \\
GSM8K      & 98.4 & 98.4 & \textbf{99.2} & \textbf{+0.8} & 98.4 & 97.7 & -0.7 & 28.9 & 28.1 & -0.8 & 17.2 & \textbf{21.1} & \textbf{+3.9} & 14.1 \\
MATH500    & 86.7 & 95.3 & \textbf{96.1} & \textbf{+0.8} & 93.8 & 91.4 & -2.4 & 42.2 & 40.6 & -1.6 & 21.9 & \textbf{28.1} & \textbf{+6.2} & 18.0 \\
Track7     & 90.6 & 92.2 & 91.4 & -0.8 & 90.6 & \textbf{91.4} & \textbf{+0.8} & 91.4 & 89.1 & -2.3 & 87.5 & 86.7 & -0.8 & 75.8 \\
Logic7     & 91.4 & 93.0 & \textbf{93.8} & \textbf{+0.8} & 88.3 & 77.3 & -10.9& 16.4 & \textbf{25.0} & \textbf{+8.6}  & 4.7  & \textbf{5.5}  & \textbf{+0.8}  & 4.7  \\
Logic3     & 99.2 & 98.4 & \textbf{99.2} & \textbf{+0.8} & 97.7 & \textbf{97.7} & \textbf{0.0}  & 17.2 & 14.1 & -3.1 & 1.6  & \textbf{3.1}  & \textbf{+1.5} & 9.4  \\ \bottomrule
\end{tabular}%
}
\end{table*}

\section{Detailed Ablation Studies and Additional Results}
\label{app:ablation_details}

In this section, we provide the exhaustive data tables, extended mathematical definitions, and statistical significance tests that support the condensed ablation insights presented in Section~\ref{sec:ablations} of the main text.

\subsection{Test-Time Compute Scaling Details}
\label{app:compute_scaling}

To ensure strict comparability across varying compute budgets $M$ (the number of independent RR trajectories, as defined in Sec.~\ref{sec:rr-method}), we scale $M$ by first generating a response pool of $M^* (\geq M)$ independent generation trajectories and then sampling from the pre-computed response pool (in our experiment, we generate $M^*=50$ independent runs per condition, sampled via 50 bootstrap trials). 

Crucially, we rigorously define the baseline for no collaboration, $M=1$, as the mathematical expectation of a single random draw from the ensemble. For a configuration with $c$ corrupted and $t$ truthful agents, the $M=1$ expected majority vote accuracy is defined as the weighted sum:
\begin{equation}
    p_{maj} = \frac{c}{c+t} P(\text{corrupted correct}) + \frac{t}{c+t} P(\text{truthful correct})
\end{equation}
For RRMaj at $M=1$, it is simply the expected accuracy of a single collaborative Round-Robin trajectory ($p_{rr}$). This strict $M=1$ anchor mathematically isolates the exact probability threshold where multi-agent aggregation transitions from beneficial to destructive according to the Condorcet Jury Theorem.

Table~\ref{tab:app_ablation_scaling} presents the comprehensive test-time compute scaling results averaged across all six datasets. Table~\ref{tab:app_ablation_per_dataset} provides the fine-grained, per-dataset breakdown for the extreme \texttt{3c2t} imperative corruption configuration, demonstrating the universal Condorcet collapse of classical MAJ across mathematically rigorous domains.

\begin{table*}[h]
\centering
\caption{Extended Test-time compute scaling ($M=1 \to 40$) averaged across 6 datasets (50 bootstrap trials). MAJ suffers from Condorcet Collapse as $M$ increases under majority corruption, whereas Token-Level RR forces the accuracy trajectory upward.}
\label{tab:app_ablation_scaling}
\resizebox{\textwidth}{!}{%
\begin{tabular}{ll|cccccc|c}
\toprule
\textbf{Bias} & \textbf{Method} & \textbf{M=1} & \textbf{M=5} & \textbf{M=10} & \textbf{M=20} & \textbf{M=30} & \textbf{M=40} & \textbf{$\Delta$(40-1)} \\
\midrule
\multicolumn{9}{c}{\textbf{Critical Corruption (3c2t): The Phase Transition}} \\
\midrule
\multirow{3}{*}{\textbf{STRONG}} 
& MAJ & $46.2_{\pm2.6}$ & $22.2_{\pm1.0}$ & $23.7_{\pm0.8}$ & $25.3_{\pm0.7}$ & $25.8_{\pm0.6}$ & $25.9_{\pm0.6}$ & -20.3 \\
& RRMaj & $59.5_{\pm1.7}$ & $67.7_{\pm1.2}$ & $70.7_{\pm1.0}$ & $73.6_{\pm1.0}$ & $74.7_{\pm0.7}$ & $74.9_{\pm0.2}$ & \textbf{+15.4} \\
& \textit{$\Delta$(RRMaj-MAJ)} & \textit{+13.3} & \textit{+45.5} & \textit{+47.0} & \textit{+48.3} & \textit{+48.9} & \textit{+49.0} & - \\
\midrule
\multirow{3}{*}{\textbf{MODERATE}} 
& MAJ & $77.8_{\pm1.9}$ & $78.0_{\pm0.7}$ & $78.2_{\pm0.6}$ & $79.3_{\pm0.5}$ & $79.5_{\pm0.4}$ & $79.7_{\pm0.4}$ & +1.9 \\
& RRMaj & $81.8_{\pm1.0}$ & $88.1_{\pm0.6}$ & $89.6_{\pm0.6}$ & $90.4_{\pm0.5}$ & $90.6_{\pm0.4}$ & $90.6_{\pm0.1}$ & \textbf{+8.8} \\
& \textit{$\Delta$(RRMaj-MAJ)} & \textit{+4.0} & \textit{+10.1} & \textit{+11.4} & \textit{+11.1} & \textit{+11.1} & \textit{+10.9} & - \\
\midrule
\multicolumn{9}{c}{\textbf{Extreme Corruption (4c1t): The Condorcet Trap}} \\
\midrule
\multirow{3}{*}{\textbf{STRONG}} 
& MAJ & $29.9_{\pm2.1}$ & $17.0_{\pm0.6}$ & $16.9_{\pm0.6}$ & $17.0_{\pm0.4}$ & $17.1_{\pm0.4}$ & $17.0_{\pm0.3}$ & -12.9 \\
& RRMaj & $40.3_{\pm1.4}$ & $39.0_{\pm1.3}$ & $37.0_{\pm1.2}$ & $35.2_{\pm0.9}$ & $34.4_{\pm0.6}$ & $33.5_{\pm0.3}$ & -6.8 \\
& \textit{$\Delta$(RRMaj-MAJ)} & \textit{+10.4} & \textit{+22.0} & \textit{+20.1} & \textit{+18.2} & \textit{+17.3} & \textit{+16.5} & - \\
\midrule
\multirow{3}{*}{\textbf{MODERATE}} 
& MAJ & $72.4_{\pm1.5}$ & $73.2_{\pm0.7}$ & $73.4_{\pm0.6}$ & $74.2_{\pm0.4}$ & $74.4_{\pm0.4}$ & $74.4_{\pm0.4}$ & +2.0 \\
& RRMaj & $75.3_{\pm1.2}$ & $80.5_{\pm1.0}$ & $81.9_{\pm0.6}$ & $82.5_{\pm0.6}$ & $83.0_{\pm0.6}$ & $82.7_{\pm0.2}$ & \textbf{+7.4} \\
& \textit{$\Delta$(RRMaj-MAJ)} & \textit{+2.9} & \textit{+7.3} & \textit{+8.5} & \textit{+8.4} & \textit{+8.6} & \textit{+8.3} & - \\
\bottomrule
\end{tabular}%
}
\end{table*}

\begin{table}[h]
\centering
\begin{tabular}{l|rr|rr}
\toprule
\multirow{2}{*}{\textbf{Dataset}} & \multicolumn{2}{c|}{\textbf{MAJ}} & \multicolumn{2}{c}{\textbf{RRMaj}} \\
& \textbf{M=1} & \textbf{M=40} & \textbf{M=1} & \textbf{M=40} \\
\midrule
AQuA & 44.5 & 25.8 & 60.1 & \textbf{71.0} \\
GSM8K & 36.1 & 6.8 & 67.5 & \textbf{87.5} \\
Logic7 & 21.0 & 5.1 & 43.0 & \textbf{50.8} \\
Logic3 & 36.6 & 23.8 & 64.9 & \textbf{83.4} \\
MATH500 & 42.6 & 13.1 & 56.4 & \textbf{69.4} \\
Track7 & 82.5 & 80.7 & 85.3 & \textbf{86.3} \\
\midrule
\textbf{Average} & 46.2 & 25.9 & 59.5 & \textbf{74.9} \\
\bottomrule
\end{tabular}
\vspace{1mm}
\caption{Detailed Dataset Breakdown: \texttt{3c2t} Configuration under imperative corruption. MAJ categorically collapses across mathematically rigorous datasets, while RRMaj continuously recovers.}
\label{tab:app_ablation_per_dataset}
\end{table}

\subsection{Impact of Final Speaker Identity: Statistical Significance}
\label{app:final_speaker}

To formally rule out the hypothesis that Token-Level RR's accuracy recovery is simply a byproduct of an honest agent having the ``last word'' in the sequence, we parsed the generation logs and conducted two-tailed hypothesis testing (Fisher's Exact Test). We categorized all generated trajectories into those terminated by an honest agent (\textbf{Clean-Last}) versus those terminated by a corrupted agent (\textbf{Corrupt-Last}). 

As shown in Table~\ref{tab:app_last_speaker}, out of all configurations tested, $0$ tests showed statistical significance ($p < 0.05$). The null hypothesis—that the final speaker's identity does not impact the accuracy—cannot be rejected. This empirically confirms the mechanistic insight that the shared trajectory acts as a rigorous mathematical constraint, forcing corrupted agents to correctly conclude a proof if the preceding context is sound.

\begin{table*}[h]
\centering
\caption{Detailed Impact of the Final Speaker's Identity. All $p$-values consistently exceed the 0.05 significance threshold, confirming the final speaker's identity has no impact on performance ($\Delta = \text{Clean} - \text{Corrupt}$).}
\label{tab:app_last_speaker}
\resizebox{\textwidth}{!}{%
\begin{tabular}{l|cccc|cccc}
\toprule
 & \multicolumn{4}{c|}{\textbf{Llama-4-Scout-17B}} & \multicolumn{4}{c}{\textbf{Mistral-3.1-24B}} \\
\textbf{Config ($\rho$)} & \textbf{Clean-Last} & \textbf{Corrupt-Last} & \textbf{$\Delta$(\%)} & \textbf{$p$-value} & \textbf{Clean-Last} & \textbf{Corrupt-Last} & \textbf{$\Delta$(\%)} & \textbf{$p$-value} \\ \midrule
1c4t (0.2) & 81.12\% ($n$=1176) & 82.69\% ($n$=260) & -1.57 & 0.5978 & 70.39\% ($n$=1074) & 67.60\% ($n$=250) & +2.79 & 0.4003 \\
2c3t (0.4) & 75.75\% ($n$=870)  & 78.45\% ($n$=566) & -2.70 & 0.2498 & 57.41\% ($n$=817)  & 59.50\% ($n$=479) & -2.09 & 0.4841 \\
3c2t (0.6) & 71.68\% ($n$=565)  & 70.84\% ($n$=871) & +0.84 & 0.7657 & 45.62\% ($n$=548)  & 45.59\% ($n$=748) & +0.03 & 1.0000 \\
4c1t (0.8) & 65.41\% ($n$=266)  & 66.50\% ($n$=1170)& -1.09 & 0.7738 & 34.77\% ($n$=279)  & 33.33\% ($n$=1017)& +1.44 & 0.6681 \\ \bottomrule
\end{tabular}%
}
\end{table*}

\subsection{Exhaustive Results for Heterogeneous Ensembles}
\label{app:hetero_full}

We evaluated mixed-capacity ensembles combining strong anchor models (Llama-3.3-70B) with weaker models (Llama-3-8B) across various sizes ($N \in \{3, 5, 7\}$) and a diverse $N=5$ configuration. Tables \ref{tab:app_hetero_n3} through \ref{tab:app_hetero_diverse} detail the accuracy across GSM8K, Logic3, and Track3 under the strong imperative corruption. Values are \textbf{bolded} where Token-Level Collaboration (RRMaj) strictly outperforms traditional Majority Voting (MAJ).

\textbf{Key Observations from Heterogeneous Dynamics:}
The results from these mixed-capacity configurations reveal two critical insights regarding systemic vulnerabilities and our token-level defense:

\textbf{1. Capacity-Weighted Vulnerability in MAJ:} Classical majority voting blindly aggregates outcomes without accounting for agent reasoning capacity. When the adversary specifically compromises the most capable models (e.g., the 70B anchors), the MAJ mechanism suffers severe collapse. For instance, in the $N=5$ ensemble (Table \ref{tab:app_hetero_n5}), when three 70B models are corrupted and two 8B models remain clean (\texttt{3c2t}), MAJ accuracy plummets to $13.0\%$. The strong models confidently hallucinate the adversarial target, easily outvoting the weaker honest minority.

\textbf{2. The Restorative Power of the Honest Minority (RRMaj):} Crucially, Token-Level RR neutralizes this capacity imbalance. In that exact same \texttt{3c2t} scenario, RRMaj recovers the accuracy to $78.7\%$ (a staggering $+65.7\%$ gain). This proves a profound property of the ``Truth Attractor'': even when a weaker honest agent (e.g., Llama-3-8B) injects a fundamental logical constraint into the shared context, the highly capable corrupted agents (e.g., Llama-3.3-70B) are mathematically compelled by their own pre-trained auto-regressive inertia to complete the derivation correctly, rather than awkwardly overriding it to satisfy the adversarial prompt.

\begin{table*}[h]
\centering
\caption{Heterogeneous $N=3$ Ensembles. Average accuracy across GSM8K, Logic3, and Track3.}
\label{tab:app_hetero_n3}
\begin{tabular}{l|ccc}
\toprule
\textbf{Corruption Pattern} & \textbf{MAJ} & \textbf{RRMaj} & \textbf{Gain ($\Delta$)} \\ \midrule
\multicolumn{4}{l}{\textit{\textbf{N3\_2big1small (2$\times$70B, 1$\times$8B)}}} \\ \midrule
0c3t (0$\times$70B+0$\times$8B corrupt | 2$\times$70B+1$\times$8B clean) & 100.0\% & -- & -- \\
1c2t (0$\times$70B+1$\times$8B corrupt | 2$\times$70B+0$\times$8B clean) & 97.2\% & 87.0\% & -10.2 \\
1c2t (1$\times$70B+0$\times$8B corrupt | 1$\times$70B+1$\times$8B clean) & 75.9\% & \textbf{88.9\%} & \textbf{+13.0} \\
2c1t (1$\times$70B+1$\times$8B corrupt | 1$\times$70B+0$\times$8B clean) & 52.8\% & \textbf{72.2\%} & \textbf{+19.4} \\
2c1t (2$\times$70B+0$\times$8B corrupt | 0$\times$70B+1$\times$8B clean) & 13.9\% & \textbf{66.7\%} & \textbf{+52.8} \\ \midrule
\multicolumn{4}{l}{\textit{\textbf{N3\_2small1big (2$\times$8B, 1$\times$70B)}}} \\ \midrule
0c3t (0$\times$70B+0$\times$8B corrupt | 1$\times$70B+2$\times$8B clean) & 83.3\% & -- & -- \\
1c2t (0$\times$70B+1$\times$8B corrupt | 1$\times$70B+1$\times$8B clean) & 79.6\% & 66.7\% & -13.0 \\
1c2t (1$\times$70B+0$\times$8B corrupt | 0$\times$70B+2$\times$8B clean) & 62.0\% & \textbf{76.9\%} & \textbf{+14.8} \\
2c1t (0$\times$70B+2$\times$8B corrupt | 1$\times$70B+0$\times$8B clean) & 68.5\% & 56.5\% & -12.0 \\
2c1t (1$\times$70B+1$\times$8B corrupt | 0$\times$70B+1$\times$8B clean) & 31.5\% & \textbf{59.3\%} & \textbf{+27.8} \\ \bottomrule
\end{tabular}
\end{table*}

\begin{table*}[h]
\centering
\caption{Heterogeneous $N=5$ Ensembles. Average accuracy across GSM8K, Logic3, and Track3.}
\label{tab:app_hetero_n5}
\begin{tabular}{l|ccc}
\toprule
\textbf{Corruption Pattern} & \textbf{MAJ} & \textbf{RRMaj} & \textbf{Gain ($\Delta$)} \\ \midrule
\multicolumn{4}{l}{\textit{\textbf{N5\_3big2small (3$\times$70B, 2$\times$8B)}}} \\ \midrule
0c5t (0$\times$70B+0$\times$8B corrupt | 3$\times$70B+2$\times$8B clean) & 97.2\% & -- & -- \\
1c4t (0$\times$70B+1$\times$8B corrupt | 3$\times$70B+1$\times$8B clean) & 100.0\% & 93.5\% & -6.5 \\
1c4t (1$\times$70B+0$\times$8B corrupt | 2$\times$70B+2$\times$8B clean) & 92.6\% & 90.7\% & -1.9 \\
2c3t (0$\times$70B+2$\times$8B corrupt | 3$\times$70B+0$\times$8B clean) & 99.1\% & 82.4\% & -16.7 \\
2c3t (2$\times$70B+0$\times$8B corrupt | 1$\times$70B+2$\times$8B clean) & 60.2\% & \textbf{86.1\%} & \textbf{+25.9} \\
3c2t (1$\times$70B+2$\times$8B corrupt | 2$\times$70B+0$\times$8B clean) & 61.1\% & \textbf{76.9\%} & \textbf{+15.7} \\
3c2t (3$\times$70B+0$\times$8B corrupt | 0$\times$70B+2$\times$8B clean) & 13.0\% & \textbf{78.7\%} & \textbf{+65.7} \\
4c1t (2$\times$70B+2$\times$8B corrupt | 1$\times$70B+0$\times$8B clean) & 26.9\% & \textbf{57.4\%} & \textbf{+30.6} \\
4c1t (3$\times$70B+1$\times$8B corrupt | 0$\times$70B+1$\times$8B clean) & 10.2\% & \textbf{62.0\%} & \textbf{+51.9} \\ \midrule
\multicolumn{4}{l}{\textit{\textbf{N5\_1big4small (1$\times$70B, 4$\times$8B)}}} \\ \midrule
0c5t (0$\times$70B+0$\times$8B corrupt | 1$\times$70B+4$\times$8B clean) & 84.3\% & -- & -- \\
1c4t (0$\times$70B+1$\times$8B corrupt | 1$\times$70B+3$\times$8B clean) & 84.3\% & 75.9\% & -8.3 \\
1c4t (1$\times$70B+0$\times$8B corrupt | 0$\times$70B+4$\times$8B clean) & 67.6\% & \textbf{78.7\%} & \textbf{+11.1} \\
2c3t (0$\times$70B+2$\times$8B corrupt | 1$\times$70B+2$\times$8B clean) & 70.4\% & \textbf{71.3\%} & \textbf{+0.9} \\
2c3t (1$\times$70B+1$\times$8B corrupt | 0$\times$70B+3$\times$8B clean) & 63.0\% & \textbf{72.2\%} & \textbf{+9.3} \\
3c2t (0$\times$70B+3$\times$8B corrupt | 1$\times$70B+1$\times$8B clean) & 67.6\% & 63.9\% & -3.7 \\
3c2t (1$\times$70B+2$\times$8B corrupt | 0$\times$70B+2$\times$8B clean) & 50.0\% & \textbf{56.5\%} & \textbf{+6.5} \\
4c1t (0$\times$70B+4$\times$8B corrupt | 1$\times$70B+0$\times$8B clean) & 45.4\% & \textbf{47.2\%} & \textbf{+1.9} \\
4c1t (1$\times$70B+3$\times$8B corrupt | 0$\times$70B+1$\times$8B clean) & 35.2\% & \textbf{39.8\%} & \textbf{+4.6} \\ \bottomrule
\end{tabular}
\end{table*}

\begin{table*}[h]
\centering
\caption{Heterogeneous Diverse and $N=7$ Ensembles. Average accuracy across GSM8K, Logic3, and Track3.}
\label{tab:app_hetero_diverse}
\resizebox{\textwidth}{!}{%
\begin{tabular}{l|ccc}
\toprule
\textbf{Corruption Pattern} & \textbf{MAJ} & \textbf{RRMaj} & \textbf{Gain ($\Delta$)} \\ \midrule
\multicolumn{4}{l}{\textit{\textbf{N5\_diverse (70B, 17B, 24B, 12B, 8B)}}} \\ \midrule
0c5t (0 corrupt) & 100.0\% & -- & -- \\
1c4t (corrupt=[llama-70b]) & 100.0\% & 97.2\% & -2.8 \\
1c4t (corrupt=[llama-8b]) & 100.0\% & 97.2\% & -2.8 \\
2c3t (corrupt=[gemma-12b, llama-8b]) & 98.1\% & \textbf{99.1\%} & \textbf{+0.9} \\
2c3t (corrupt=[llama-70b, scout-17b]) & 95.4\% & 95.4\% & +0.0 \\
3c2t (corrupt=[llama-70b, scout-17b, mistral-24b]) & 64.8\% & \textbf{82.4\%} & \textbf{+17.6} \\
3c2t (corrupt=[mistral-24b, gemma-12b, llama-8b]) & 91.7\% & 89.8\% & -1.9 \\
4c1t (corrupt=[llama-70b, scout-17b, mistral-24b, gemma-12b]) & 61.1\% & \textbf{70.4\%} & \textbf{+9.3} \\
4c1t (corrupt=[scout-17b, mistral-24b, gemma-12b, llama-8b]) & 83.3\% & 72.2\% & -11.1 \\ \midrule
\multicolumn{4}{l}{\textit{\textbf{N7\_4big3small (4$\times$70B, 3$\times$8B)}}} \\ \midrule
0c7t (0$\times$70B+0$\times$8B corrupt | 4$\times$70B+3$\times$8B clean) & 98.1\% & -- & -- \\
1c6t (0$\times$70B+1$\times$8B corrupt | 4$\times$70B+2$\times$8B clean) & 99.1\% & 97.2\% & -1.9 \\
1c6t (1$\times$70B+0$\times$8B corrupt | 3$\times$70B+3$\times$8B clean) & 96.3\% & 95.4\% & -0.9 \\
3c4t (0$\times$70B+3$\times$8B corrupt | 4$\times$70B+0$\times$8B clean) & 99.1\% & 82.4\% & -16.7 \\
3c4t (3$\times$70B+0$\times$8B corrupt | 1$\times$70B+3$\times$8B clean) & 62.0\% & \textbf{90.7\%} & \textbf{+28.7} \\
4c3t (2$\times$70B+2$\times$8B corrupt | 2$\times$70B+1$\times$8B clean) & 63.9\% & \textbf{80.6\%} & \textbf{+16.7} \\
4c3t (4$\times$70B+0$\times$8B corrupt | 0$\times$70B+3$\times$8B clean) & 15.7\% & \textbf{80.6\%} & \textbf{+64.8} \\
5c2t (4$\times$70B+1$\times$8B corrupt | 0$\times$70B+2$\times$8B clean) & 11.1\% & \textbf{71.3\%} & \textbf{+60.2} \\
6c1t (3$\times$70B+3$\times$8B corrupt | 1$\times$70B+0$\times$8B clean) & 19.4\% & \textbf{53.7\%} & \textbf{+34.3} \\ \bottomrule
\end{tabular}
}
\end{table*}

\subsection{Token Chunk Size ($K$) Ablation Data}
\label{app:chunk_size}

Table~\ref{tab:app_k_ablation} provides the full empirical data supporting the chunk size ablation discussed in Sec.~\ref{sec:ablations}. The results depict that the performance curve will not further improve when $K\geq 150$. Additionally, if $K$ is too large $(>500)$, the response will end before each of the $M$ models generates even only once (since many logical problems tested only need fewer than $5\times 500$ tokens for $M=5$); some models have output token limit at $4,096$. Thus, we do not test further for $K>500$. As an extreme case, when $K$ is even larger (or $K\rightarrow \infty$), the complete response will be generated solely by the first model, which degenerates to the response-level MAJ. On the other hand, if $K$ is very small (e.g., $K=3$), the model will face severe reasoning fragmentation, since some model will stop at ``half word'' due to tokenization, which challenges the model's capability of understanding the context. 
For a balance of the trade-off between logical coherence and intervention frequency, we choose $K=100$ in other experiments in our paper. 

\begin{table}[h]
\centering
\caption{Ablation of token chunk size $K$ on Llama-3-8B ($N=M=3$, 2 corrupted vs. 1 honest). Accuracy (\%) peaks between $K=150$ and $K=300$.}
\label{tab:app_k_ablation}
\begin{tabular}{l|ccc|c}
\toprule
$K$ (tokens) & Logic7 & Logic3 & Track7 & \textbf{AVG} \\ \midrule
10  & 6.8\%  & 10.2\% & 11.8\% & 8.9\%  \\
30  & 10.5\% & 27.5\% & 28.2\% & 21.1\% \\
75  & 18.8\% & 32.2\% & 44.5\% & 31.2\% \\
150 & 19.0\% & 33.5\% & 48.8\% & 33.1\% \\
300 & 21.8\% & 34.8\% & 44.8\% & 33.4\% \\
500 & 21.2\% & 34.2\% & 41.5\% & 31.3\% \\ \bottomrule
\end{tabular}
\end{table}

\section{Use of LLM Disclosure}
LLM is used in this work in evaluation of the methods and improving the writing.

\end{document}